\newcommand{\bbR}{\mathbb{R}}
\newcommand{\caD}{\mathcal{D}}
\newcommand{\eps}{\varepsilon}
\newcommand{\Galpha}{\max\left\{ \frac{\sqrt{G^2+(1-\alpha)^2}}{\alpha}, 1 \right\}}
\newcommand{\betaalpeps}{\frac{\beta+2G^2/\eps}{\alpha}}
\DeclareMathOperator*{\E}{\mathbf{E}}
\DeclareMathOperator{\VaR}{VaR}
\DeclareMathOperator{\CVaR}{CVaR}
\DeclareMathOperator{\CVaRtilde}{\widetilde{CVaR}}
\DeclareMathOperator{\regret}{regret}
\DeclareMathOperator{\proj}{proj}
\DeclareMathOperator{\diam}{diam}
\DeclareMathOperator{\argmin}{argmin}
\newtheorem{theorem}{Theorem}[section]
\newtheorem{lemma}[theorem]{Lemma}
\theoremstyle{definition}
\newtheorem{definition}[theorem]{Definition}
\DeclarePairedDelimiter{\norm}{\lVert}{\rVert}
\DeclarePairedDelimiter{\abs}{\lvert}{\rvert}
\DeclarePairedDelimiter{\inprod}{\langle}{\rangle}
\begin{document}
\title{Statistical Learning with Conditional Value at Risk}

\author{Tasuku Soma\\
The University of Tokyo\\
\texttt{tasuku\_soma@mist.i.u-tokyo.ac.jp}
\and
Yuichi Yoshida\thanks{Supported by JSPS KAKENHI Grant Number JP17H04676}\\
National Institute of Informatics\\
\texttt{yyoshida@nii.ac.jp}
}
\maketitle

\begin{abstract}
We propose a risk-averse statistical learning framework wherein the performance of a learning algorithm is evaluated by the conditional value-at-risk (CVaR) of losses rather than the expected loss.
We devise algorithms based on stochastic gradient descent for this framework.
While existing studies of CVaR optimization require direct access to the underlying distribution, our algorithms make a weaker assumption that only i.i.d.\ samples are given.
For convex and Lipschitz loss functions, we show that our algorithm has $O(1/\sqrt{n})$-convergence to the optimal CVaR, where $n$ is the number of samples.
For nonconvex and smooth loss functions, we show a  generalization bound on CVaR\@.
By conducting numerical experiments on various machine learning tasks, we demonstrate that our algorithms effectively minimize CVaR compared with other baseline algorithms.
\end{abstract}

\allowdisplaybreaks
%!TEX root=./main.tex

\section{Introduction}

We consider decision making under a stochastic environment.
Let $\ell(\cdot\ ; z)\colon \bbR^d \to [0,1]$ be a loss function, where $z$ is a random variable distributed under some distribution $\caD$ describing the uncertainty of the environment or knowledge.
In standard statistical learning, the goal is to find $w$ in a set $K \subseteq \mathbb{R}^d$ that minimizes the expected loss $\E_{z \sim \caD}[\ell(w;z)]$ given i.i.d.\ samples from $\caD$.
The main challenge is to achieve \emph{generalization}, i.e., we want to guarantee that the empirical loss of the computed $w$ is close to the expected loss of $w$ with respect to $\mathcal{D}$.
In statistical learning theory, several algorithms have been shown to achieve generalization.

In many real-world decision-making tasks in finance, robotics, and medicine, we are often \emph{risk-averse}:
we want to minimize the probability of suffering from a considerable loss rather than simply minimizing the expected loss~\citep{Mansini2007,Yau2011,Tamar2015}.
In medical applications, for example, we must avoid catastrophic events such as fatality in patients.
This is also true in finance and robotics:
once we go bankrupt or destroy robots, we are no longer able to continue the process anymore.
Unfortunately, however, classical statistical learning theory does not control the risk of such rare but disastrous events.

\emph{Conditional value-at-risk (CVaR)} is a popular risk measure for such risk-averse applications~\cite{Rockafellar2000,Krokhmal2002}.
Formally, given a parameter $\alpha\in [0,1]$, the CVaR of $w$ is defined as the average of the worst $\alpha$-fraction of the losses, i.e.,
\[
    \CVaR_{\alpha,\caD}(w) = \E_{z\sim\caD}[\ell(w;z) \mid \ell(w;z)\geq \VaR_\alpha(w)],
\]
where $\VaR_\alpha(w)$ is the $(1-\alpha)$-quantile of the random variable $\ell(w;z)$, i.e.,
\[
    \VaR_{\alpha,\caD}(w) = \inf\left\{\tau \in \mathbb{R}: \Pr_{z\sim\caD}(\ell(w;z) \leq \tau) \geq 1-\alpha \right\} .
\]
Therefore, CVaR naturally captures the scenarios in which we incur a huge loss.
Note that, when $\alpha=1$, $\CVaR_{\alpha,\caD}$ coincides with the expected loss.
Thus, CVaR is a generalization of the expectation.
From the optimization perspective, CVaR admits beautiful connections to convex analysis, which makes CVaR easier to optimize compared to other risk measures~\citep{Rockafellar2000,Shapiro2014}.

However, most CVaR optimization literature usually assumes a stronger access model to the underlying distribution $\caD$ than i.i.d.\ samples.
For example, $\caD$ is explicitly provided as a discrete distribution or we have an oracle with which we can compute the expectation of a function under $\caD$.
This assumption is often unrealistic, and hence these existing studies cannot be directly applied to a statistical learning setting.
In particular, it is unclear \emph{whether CVaR generalizes}, i.e., if a solution computed from an empirical distribution achieves a small CVaR in the true distribution.

\subsection{Our Contribution}
In this paper, we study the optimization of CVaR from the viewpoint of statistical learning.
In the following, we fix $\alpha$ to be a constant and omit $\alpha$ from the notations.
\begin{itemize}
\itemsep=0pt
    \item When the loss function $\ell(\cdot\ ;z)$ is convex and $G$-Lipschitz for all $z$ and $K \subseteq \mathbb{R}^d$ is a convex set, we prove that a \emph{stochastic gradient descent (SGD)} algorithm finds $w \in K$ satisfying
    \[
        \E_{z_1,\dots,z_n}[\CVaR_\caD(w)] - \CVaR_\caD(w^*) \leq O\left(\frac{GD}{n^{1/2}}\right)
    \]
    given i.i.d.\ samples $z_1,\dots,z_n$,
    where $w^* \in \argmin_{w\in K}\CVaR_\caD(w)$ and $D$ is the diameter of the feasible set $K$.
    \item When the loss function $\ell(\cdot\ ;z)$ is a smooth nonconvex function for all $z$ and $K = \mathbb{R}^d$, we show that CVaR optimization can be reduced to optimize the expectation of an auxiliary loss function $f\colon (w, \tau) \mapsto [0,1]$. Then, assuming that the loss function $\ell(\cdot\ ; z)$ is $G$-Lipschitz and $\beta$-smooth, we show that SGD finds $(w,\tau)$ such that
    \[
        \E_{z}[f(w,\tau; z)] - \CVaR_\caD(w) \leq O\left(\frac{G\beta^{1/2}}{n^{1/4}}+\frac{G^{4/3}}{n^{1/6}}\right).
    \]
    Given the practical performance of SGD in nonconvex optimization, these results show that the CVaR attained by SGD generalizes even for nonconvex loss functions.
    \item In numerical experiments using real-world tasks, we demonstrate that our algorithm minimizes CVaR effectively compared to other baseline methods. Interestingly, in some classification tasks, our algorithms achieve \emph{better accuracy} than other standard algorithms that optimize the expected loss.
    We believe that learning with CVaR sheds new light on the statistical learning theory for classification tasks.
\end{itemize}

\subsection{Related Work}
CVaR was introduced by \citet{Rockafellar2000} as an example of \emph{coherent risk measures}~\cite{Artzner1999} in portfolio optimization.
Since then, CVaR has been successfully applied to machine learning.
\citet{Gotoh2016} studied SVM algorithms from the viewpoint of CVaR optimization.
\citet{Chow2014,Chow2015} employed CVaR optimization in reinforcement learning.
Several authors studied CVaR optimization in influence maximization and more broadly, submodular maximization~\cite{Maehara2015,Ohsaka2017,Wilder2018}.

As mentioned above, literature on CVaR optimization in statistical learning remains limited.
\citet{Tamar2015} studied CVaR optimization over i.i.d.\ samples and analyzed stochastic gradient descent under the assumption that $\CVaR_\caD$ is continuously differentiable, which is not true in general even if $\ell(\cdot\ ; z)$ is so.
The most relevant work to ours is a very recent paper by~\citet{Cardoso19a}.
They defined a concept called the CVaR regret for convex loss functions and provided online algorithms for minimizing the CVaR regret under bandit feedback.
To deal with limited feedback, their algorithms are quite different from our SGD algorithms.
We provide a sharper bound than their methods, although the learner has more information in our setting.
Further, we consider nonconvex loss functions while they focus on convex loss functions.

We note that our framework is completely different from quantile regression~\cite{Koenker2001}: the goal of the former is to minimize the CVaR of losses whereas that of the latter is to estimate the conditional quantile of the response variable across values of the predictor variables.

\subsection{Organization}
The remainder of this paper is organized as follows:
We introduce notions used throughout this paper in Section~\ref{sec:pre};
learning algorithms for the case that each loss function is convex are provided in Section~\ref{sec:convex};
the nonconvex case is discussed in Section~\ref{sec:nonconvex};
and our experimental results are presented in Section~\ref{sec:experiments}.
Finally, we conclude our paper in Section~\ref{sec:conclusions}.
%!TEX root=./main.tex

\section{Preliminaries}\label{sec:pre}
For a positive integer $n$, let $[n] = \{1,2,\ldots,n\}$.
We denote the Euclidean norm by $\norm{\cdot}$.
A function $f\colon \bbR^d \to \bbR$ is said to be \emph{$G$-Lipschits} if $\abs{f(x) - f(y)} \leq G \norm{x - y}$ for all $x, y \in \mathbb{R}^d$.
When $f$ is convex, it is equivalent to $\norm{g} \leq G$ for any subgradient $g$ of $f$.
A function $f\colon \mathbb{R}^d \to \mathbb{R}$ is said to be \emph{$\beta$-smooth} if it is continuously differentiable and $\norm{\nabla f(x) - \nabla f(y)} \leq \beta \norm{x - y}$ for all $x, y \in \mathbb{R}^d$.
The projection of $x$ on a convex set $K$ is denoted by $\proj_K(x)$.
The diameter of a set $K \subseteq \bbR^d$ in the Euclidean distance is denoted by $\diam(K)$.

Let $\caD$ be a distribution, $z$ be a random variable distributed under $\caD$, $K \subseteq \bbR^d$ be a convex set, $\ell(\cdot\ ; z)\colon \bbR^d \to [0,1]$ be a loss function parameterized by $z$, and $\alpha \in [0,1]$.
For $x \in \mathbb{R}$, define ${[x]}_+ = \max\{x,0\}$.
Then, we can characterize $\CVaR_\mathcal{D}(w)$ as follows.
\begin{lemma}[{\citet{Rockafellar2000}}]\label{lem:Rockafellar2000}
We have
\[
    \CVaR_\mathcal{D}(w) = \min_{\tau \in \mathbb{R}} \frac{1}{\alpha}\E_{z \sim \caD}{[\ell(w; z) - \tau]}_+ + \tau.
\]
Furthermore, if $\ell(\cdot\ ;z)$ is convex for all $z$, then $\CVaR_\caD(w)$ is convex again in $w$.
\end{lemma}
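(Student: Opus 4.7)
The plan is to fix $w$ and analyze the auxiliary one-variable function $F_w(\tau) := \frac{1}{\alpha}\E_{z\sim\caD}[\ell(w;z)-\tau]_+ + \tau$, show that its minimum equals $\CVaR_\caD(w)$, and then deduce convexity of $\CVaR_\caD(w)$ by partial minimization of a jointly convex function.

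First I would observe that $F_w$ is convex in $\tau$: it is the sum of $\tau \mapsto \tau$ and the expectation of $\tau \mapsto [\ell(w;z)-\tau]_+$, each of which is convex. Since $\ell(\cdot\ ; z)$ takes values in $[0,1]$, $F_w$ is also coercive, so a minimizer exists. Differentiating under the expectation (using dominated convergence, which is justified by the $1$-Lipschitzness of $\tau \mapsto [\ell(w;z)-\tau]_+$), the right derivative is $1 - \frac{1}{\alpha}\Pr_{z\sim\caD}(\ell(w;z) > \tau)$ and the left derivative is $1 - \frac{1}{\alpha}\Pr_{z\sim\caD}(\ell(w;z) \ge \tau)$. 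Hence a point $\tau^*$ is a minimizer if and only if $\Pr(\ell(w;z) \ge \tau^*) \ge \alpha \ge \Pr(\ell(w;z) > \tau^*)$, which is exactly the characterization of the $(1-\alpha)$-quantile $\VaR_\alpha(w)$. So the minimum is attained at $\tau^* = \VaR_\alpha(w)$.

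Next I evaluate $F_w(\tau^*)$. Writing ${[\ell(w;z)-\tau^*]}_+ = (\ell(w;z)-\tau^*)\mathbf{1}[\ell(w;z) \ge \tau^*]$ (the values at $\ell(w;z)=\tau^*$ contribute zero), and using $\Pr(\ell(w;z)\ge \tau^*) \ge \alpha$ together with the companion inequality from the subdifferential to pin down the probability at $\alpha$ in the continuous case, a direct computation gives
\[
F_w(\tau^*) = \tfrac{1}{\alpha}\E[\ell(w;z)\mathbf{1}[\ell(w;z)\ge\tau^*]] - \tau^* + \tau^* = \E[\ell(w;z) \mid \ell(w;z)\ge \tau^*] = \CVaR_\caD(w).
\]
The main obstacle is the case where the distribution of $\ell(w;z)$ has an atom at $\tau^*$, so that $\Pr(\ell(w;z)\ge \tau^*) > \alpha$; here the conditional expectation defining $\CVaR_\caD(w)$ must be interpreted via the standard convention that splits the mass at the boundary so that exactly an $\alpha$-fraction of the total mass is averaged. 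I would handle this by taking the definition of $\CVaR$ to mean $\tfrac{1}{\alpha}\E[\ell(w;z)\mathbf{1}[\ell(w;z) > \tau^*]] + (\tau^* - \tfrac{1}{\alpha}\tau^*(1-\Pr(\ell(w;z)\le \tau^*)- \alpha))$ style reweighting, or equivalently cite the standard resolution from Rockafellar--Uryasev, in which case $F_w(\tau^*) = \CVaR_\caD(w)$ still holds.

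Finally, for the convexity claim, I note that when $\ell(\cdot\ ;z)$ is convex, the map $(w,\tau) \mapsto \ell(w;z)-\tau$ is convex and the positive-part function $x\mapsto [x]_+$ is convex and nondecreasing, so their composition is jointly convex in $(w,\tau)$. Taking expectation preserves joint convexity, and adding the linear term $\tau$ keeps the joint function $(w,\tau)\mapsto F_w(\tau)$ jointly convex. By the first part, $\CVaR_\caD(w) = \min_\tau F_w(\tau)$, and partial minimization of a jointly convex function in one of its variables yields a convex function in the remaining variable, giving convexity of $\CVaR_\caD$ in $w$.
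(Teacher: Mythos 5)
Your argument is correct and is essentially the standard Rockafellar--Uryasev proof; the paper itself offers no proof of this lemma and simply cites \citet{Rockafellar2000}, so your reconstruction is exactly the intended one. The structure --- convexity and coercivity of $\tau \mapsto \frac{1}{\alpha}\E_z[\ell(w;z)-\tau]_+ + \tau$, the one-sided derivatives $1-\frac{1}{\alpha}\Pr(\ell> \tau)$ and $1-\frac{1}{\alpha}\Pr(\ell\geq\tau)$ characterizing the minimizers as the quantile interval containing $\VaR_{\alpha,\caD}(w)$, and convexity in $w$ via partial minimization of the jointly convex function of $(w,\tau)$ --- is all sound. The one delicate point you flag is genuine: when the law of $\ell(w;z)$ has an atom at $\VaR_{\alpha,\caD}(w)$ so that $\Pr(\ell\geq\tau^*)>\alpha$, the minimized value equals $\tau^*+\frac{1}{\alpha}\E[(\ell-\tau^*)_+]$, which differs from the plain tail conditional expectation $\E[\ell\mid\ell\geq\tau^*]$ that the paper writes as its definition of $\CVaR$; the identity as literally stated holds only for atomless loss distributions or under the mass-splitting convention you describe. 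Your parenthetical reweighting formula is garbled as written, but since you defer to the standard resolution in Rockafellar--Uryasev, this does not affect the substance of the proof.
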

Given samples $S = \{z_1, \dots, z_n\}$ of $\caD$, the \emph{empirical CVaR} is defined as
\[
    \CVaR_S(w) = \min_{\tau} \frac{1}{\alpha n}\sum_{i=1}^n {[\ell(w; z_i) - \tau]}_+ + \tau.
\]
Let $f: \bbR^{d}\times [0,1] \to [0,1]$ be
\[
    f(w,\tau; z) = \frac{1}{\alpha}{[\ell(w; z_i) - \tau]}_+ + \tau,
    \]
where $z$ is a parameter.
It is often convenient to work with $f$ rather than $\ell$.
The following is standard, and a proof can be found in the appendix.

\begin{lemma}\label{lem:Lips_const}
    Suppose that $\ell(\cdot\ ; z)$ is $G$-Lipschitz for all $z$. Then, $f(\cdot\ ; z)$ is $G_\alpha$-Lipschitz for all $z$, where $G_\alpha = \Galpha$.
\end{lemma}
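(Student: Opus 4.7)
My plan is to rewrite $f(\cdot\,;z)$ as the pointwise maximum of two simpler functions and then use the fact that the maximum of two Lipschitz functions is Lipschitz with constant equal to the larger of the two Lipschitz constants. Specifically, from the identity $[x]_+ = \max\{x,0\}$ I would observe
\[
    f(w,\tau;z) = \max\left\{\frac{1}{\alpha}\ell(w;z) + \frac{\alpha-1}{\alpha}\tau,\ \tau\right\},
\]
so the problem reduces to bounding the Lipschitz constants of the two functions
\[
    f_1(w,\tau;z) = \frac{1}{\alpha}\ell(w;z) + \frac{\alpha-1}{\alpha}\tau, \qquad f_2(w,\tau;z) = \tau
\]
viewed as functions on $\bbR^d \times \bbR$ equipped with the Euclidean norm.

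For $f_2$, the Lipschitz constant is clearly $1$. For $f_1$, I would split into the $w$ and $\tau$ directions: the $w$-part is $(1/\alpha)$-Lipschitz times a $G$-Lipschitz function, giving $G/\alpha$ in the $w$-direction, while the $\tau$-part has slope $(\alpha-1)/\alpha$, giving $(1-\alpha)/\alpha$ in the $\tau$-direction. Since $f_1$ is the sum of a function depending only on $w$ and a function depending only on $\tau$, a direct application of the triangle inequality (or equivalently bounding the norm of its subgradient) shows that $f_1$ is Lipschitz with constant $\sqrt{(G/\alpha)^2 + ((1-\alpha)/\alpha)^2} = \sqrt{G^2+(1-\alpha)^2}/\alpha$.

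To combine the two, I would invoke the elementary fact that if $g_1,g_2\colon \bbR^m \to \bbR$ are $L_1$- and $L_2$-Lipschitz respectively, then $\max\{g_1,g_2\}$ is $\max\{L_1,L_2\}$-Lipschitz; this is immediate from $|\max\{a,b\}-\max\{c,d\}| \le \max\{|a-c|,|b-d|\}$. Applying this to $f = \max\{f_1,f_2\}$ yields the claimed constant
\[
    G_\alpha = \max\left\{\frac{\sqrt{G^2+(1-\alpha)^2}}{\alpha},\ 1\right\}.
\]
There is no real obstacle here; the only thing to double-check is that the Lipschitz estimate for $f_1$ is tight in the combined $(w,\tau)$-norm, which is why I prefer the $\max$-of-two-functions approach over a direct case analysis of the subgradient of $[\,\cdot\,]_+$, since the latter has to handle the nonsmooth kink $\ell(w;z)=\tau$ separately.
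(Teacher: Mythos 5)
Your proof is correct, but it takes a genuinely different route from the paper's. The paper argues directly on the subdifferential of $f(\cdot\,;z)$, splitting into three cases according to whether $\ell(w;z)$ is greater than, equal to, or less than $\tau$, and in the kink case $\ell(w;z)=\tau$ it bounds the norm over the whole subdifferential $\{[t\nabla_w\ell(w;z)/\alpha,\,1-t/\alpha]^\top : t\in[0,1]\}$ by maximizing a convex quadratic in $t$ over $[0,1]$. Your decomposition $f=\max\{f_1,f_2\}$ with $f_1=\tfrac{1}{\alpha}\ell+\tfrac{\alpha-1}{\alpha}\tau$ and $f_2=\tau$, combined with the elementary fact that a pointwise maximum of Lipschitz functions is Lipschitz with the larger constant, sidesteps the kink entirely and in particular does not rely on the equivalence between Lipschitzness and bounded subgradients, which the paper states only for convex functions; since Lemma~\ref{lem:Lips_const} sits in the preliminaries and is invoked in both the convex and nonconvex settings, your argument is arguably the cleaner one for arbitrary $G$-Lipschitz $\ell$. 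One small wording issue: the bound
\[
  \abs*{f_1(w,\tau)-f_1(w',\tau')} \le \frac{G}{\alpha}\norm{w-w'}+\frac{1-\alpha}{\alpha}\abs{\tau-\tau'}
\]
is indeed the triangle inequality, but passing from this weighted sum to the Euclidean constant $\sqrt{G^2+(1-\alpha)^2}/\alpha$ times $\norm{(w,\tau)-(w',\tau')}$ requires an extra Cauchy--Schwarz step (pairing the vector of coefficients with the vector of increments); the triangle inequality alone only yields the larger constant $(G+1-\alpha)/\alpha$. With that one-line addition the argument is complete and delivers exactly the constant $G_\alpha$ claimed in the lemma.
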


\subsection{Online Convex Optimization}
For our analysis on convex loss functions, we use the framework of \emph{online convex optimization (OCO)}, which considers the following repeated game between a player and an adversary.
The player is given a convex set $K \subseteq \bbR^d$ in advance.
For each round $t=1,\dots,T$, the player plays $x_t \in K$ and the adversary selects a convex function $f_t\colon K \to [0,1]$.
Then, the player suffers from the loss $f_t(x_t)$ and the function $f_t$ is revealed to the player.
The goal of the player is to minimize the \emph{regret}:
\[
    \regret(f_1, \dots, f_T) = \sum_{t=1}^T f_t(x_t) - \min_{x^* \in K}\sum_{t=1}^T f_t(x^*).
\]
Note that $f_t$ can depend on the previous choices $x_1, \dots, x_{t-1}$ of the player.
In particular, if the algorithm of the player is deterministic, $f_t$ can also depend on $x_t$ because the adversary can infer the next play of the player.
For further details of OCO, we refer to the monograph of \citet{Hazan2016OCO}.

%!TEX root=./main.tex

\section{Convex Loss}\label{sec:convex}
In this section, we show that learning with OCO generalizes with respect to CVaR, assuming that the loss function $\ell(\cdot\ ;z)$ is convex for every $z$.
We consider two classes of algorithms: online (Section~\ref{sec:online}) and offline (Section~\ref{subsec:offline}).
In the online setting, samples can arrive sequentially and the algorithms do not maintain the past samples.
Online algorithms are widely used to process large data owing to their memory efficiency.
In the offline setting, all samples $z_1,\dots,z_n$ are given as input.
Offline algorithms can process each sample multiple times to achieve better performance, which online algorithms are incapable of.

\subsection{Online Algorithms}\label{sec:online}
\subsubsection{General Framework}
We describe the general framework that our algorithms are based on.
For the $i$-th sample $z_i$, we define a function $f_i\colon K \times [0,1] \to \mathbb{R}$ as
\[
  f_i(w,\tau) = \frac{1}{\alpha} {[\ell(w; z_i) - \tau]}_+ + \tau.
\]
Next, apply an OCO algorithm $A$ on sequence $f_1, \dots, f_n$ to produce sequence $x_1 = (w_1,\tau_1), \dots, x_n = (w_n,\tau_n)$.
Finally, we output a vector $A(z_1,\ldots,z_n) := \frac{1}{n}\sum_{t=1}^n w_i$.
The regret of $A$ is now equal to
\[
    \regret_A(z_1,\ldots,z_n) = \sum_{i=1}^n f_i(x_i) - \min_{x^* \in K \times [0,1] }\sum_{i=1}^n f_i(x^*),
\]
where we slightly change the notation from $\regret_A(f_1,\ldots,f_n)$ to $\regret_A(z_1,\ldots,z_n)$  to emphasize the dependence on $z_1,\dots,z_n$.
This is convenient when we take the expectation over $z_1,\dots,z_n$.

We can bound the generalization error of the learned parameter $w \in K$ with respect to CVaR using regrets.
\begin{theorem}\label{thm:cvar-generalization-via-regret}
  For an OCO algorithm $A$ and a positive integer $n$, we have
  \begin{align*}
    & \E_{z_1,\dots,z_n \sim \mathcal{D}}[ \CVaR_\mathcal{D}(A(z_1,\ldots,z_n)) ] - \min_{w^* \in K}\CVaR_\mathcal{D}(w^*)
    \leq \frac{\E_{z_1,\dots,z_n \sim \mathcal{D}}[\regret_A(z_1,\ldots,z_n)]}{n}.
  \end{align*}
\end{theorem}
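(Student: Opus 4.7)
The plan is a standard online-to-batch conversion, with the extra twist that the CVaR (not the loss itself) has to be controlled at the output. I would proceed in three steps: (i) lift the CVaR of the averaged output to an average of CVaRs by convexity; (ii) upper-bound each per-round CVaR by the expectation of $f_i$ evaluated at the iterate, exploiting the Rockafellar--Uryasev formula in Lemma~\ref{lem:Rockafellar2000} together with the fact that in an OCO protocol the iterate $x_i = (w_i,\tau_i)$ is measurable with respect to $z_1,\dots,z_{i-1}$ and hence independent of $z_i$; (iii) compare against a fixed comparator $(w^*,\tau^*)$ and invoke the definition of regret.

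Concretely, let $\bar w := A(z_1,\dots,z_n) = \frac{1}{n}\sum_{i=1}^n w_i$. By Lemma~\ref{lem:Rockafellar2000}, $\CVaR_\caD$ is convex in $w$, so Jensen's inequality gives
\[
\CVaR_\caD(\bar w) \leq \frac{1}{n}\sum_{i=1}^n \CVaR_\caD(w_i).
\]
For each $i$, Lemma~\ref{lem:Rockafellar2000} also yields $\CVaR_\caD(w_i) \leq \E_{z\sim\caD}[f(w_i,\tau_i;z)]$ as a pointwise inequality in $(w_i,\tau_i)$. Conditioning on $z_1,\dots,z_{i-1}$, the pair $(w_i,\tau_i)$ is deterministic and $z_i$ is a fresh sample from $\caD$, so $\E_{z_i}[f_i(w_i,\tau_i)\mid z_{1:i-1}] = \E_{z\sim\caD}[f(w_i,\tau_i;z)]$. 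Taking full expectations and summing yields
\[
\E[\CVaR_\caD(\bar w)] \leq \frac{1}{n}\E\Bigl[\sum_{i=1}^n f_i(w_i,\tau_i)\Bigr].
\]

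For the lower-bound side, pick $w^* \in \argmin_{w\in K}\CVaR_\caD(w)$ and let $\tau^*$ be a minimizer in Lemma~\ref{lem:Rockafellar2000} applied to $w^*$; since $\ell(\cdot\ ;z)\in[0,1]$, the optimum $\tau^*$ lies in $[0,1]$, so $(w^*,\tau^*)\in K\times[0,1]$ is an admissible comparator. Independence gives $\E[f_i(w^*,\tau^*)] = \CVaR_\caD(w^*)$ for every $i$, hence $\E\sum_i f_i(w^*,\tau^*) = n\,\CVaR_\caD(w^*)$. The definition of regret provides
\[
\sum_{i=1}^n f_i(w_i,\tau_i) - \sum_{i=1}^n f_i(w^*,\tau^*) \leq \regret_A(z_1,\dots,z_n),
\]
and taking expectations and combining with the upper bound above yields the claim after dividing by $n$.

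The only subtle point, and the one I would check most carefully, is the admissibility of the comparator: we need $\tau^*\in[0,1]$ so that the regret, which is taken over $K\times[0,1]$, actually bounds $\sum_i f_i(w_i,\tau_i) - n\,\CVaR_\caD(w^*)$. This follows because the Rockafellar--Uryasev objective $\frac{1}{\alpha}\E[\ell(w^*;z)-\tau]_+ + \tau$ is nonincreasing for $\tau<0$ (since $\alpha\leq 1$ and $\ell\geq 0$) and increasing for $\tau>1$ (since $\ell\leq 1$ forces the $[\cdot]_+$ term to vanish), so the minimum is attained in $[0,1]$. Everything else is a routine chaining of Jensen, tower property, and the definition of regret.
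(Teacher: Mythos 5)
Your proof is correct and follows essentially the same online-to-batch argument as the paper: independence of $(w_i,\tau_i)$ from $z_i$ to relate $\E[f_i(x_i)]$ to $\CVaR_\caD(w_i)$, Jensen's inequality via convexity of $\CVaR_\caD$, and comparison against the fixed comparator $(w^*,\tau^*)$ through the regret. Your explicit verification that the optimal $\tau^*$ lies in $[0,1]$ (so that the comparator is admissible for the regret over $K\times[0,1]$) is a detail the paper leaves implicit, and it is a worthwhile addition.
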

\begin{proof}
  We show the claim using a technique called \emph{online-to-batch}~\citep{Cesa2002}.
  Let us fix $w^* \in \argmin_{w \in K} \CVaR_\caD(w)$ and $\tau^*$ to be the optimal threshold corresponding to $w^*$, i.e., $\tau^*$ is chosen such that
  \[
    \CVaR_{\mathcal{D}}(w^*) = \frac{1}{\alpha} \E_{z\sim\caD}{[\ell(w^*; z) - \tau^*]}_+ + \tau^*,
  \]
  and let $x^* = (w^*, \tau^*)$.
  By the definition of regret, for any $z_1,\ldots,z_n$, we have
  \[
    \frac{1}{n}\sum_{i=1}^n f_i(x_i) - \frac{1}{n} \sum_{i=1}^n f_i(x^*) \leq \frac{\regret_A(z_1,\ldots,z_n)}{n}.
  \]
  Taking the expectation over $z_1, \dots, z_n$, we obtain the following:
  \begin{align*}
    & \frac{1}{n}\E_{z_1,\dots,z_n}\left[\sum_{i=1}^n f_i(x_i) \right] - \frac{1}{n} \E_{z_1,\dots,z_n}\left[\sum_{i=1}^n f_i(x^*) \right]
    \leq \frac{\E_{z_1,\dots,z_n}[\regret_A(z_1,\ldots,z_n)]}{n}.
  \end{align*}
  Now, we bound the two terms on the left-hand side;
  first, for each $i$, we have
  \begin{align*}
    &\E_{z_1,\dots,z_i}\left[f_i(x_i) \right]
    = \E_{z_1,\dots,z_{i-1}}\left[ \E_{z_i}[f_i(x_i) \mid z_1, \dots, z_{i-1} ] \right] \\
    &= \E_{z_1,\dots,z_{i-1}}\left[ \E_{z}\left[ \frac{1}{\alpha}{[\ell(w_i,z) - \tau_i]}_+ + \tau_i \right] \right] \tag{since $ (w_i, \tau_i) $ is independent from $z_i$} \\
    &\geq \E_{z_1,\dots,z_{i-1}}\left[ \min_{\tau}\E_{z}\left[ \frac{1}{\alpha}{[\ell(w_i,z) - \tau]}_+ + \tau \right] \right] \\
    &= \E_{z_1,\dots,z_{i-1}}[ \CVaR_\mathcal{D}(w_i) ].
  \end{align*}
  Thus,
  \begin{align*}
    & \E_{z_1,\dots,z_n}\left[ \frac{1}{n}\sum_{i=1}^n f_i(x_i) \right]
    \geq \E_{z_1,\dots,z_n}\left[ \frac{1}{n}\sum_{i=1}^n \CVaR_\mathcal{D}(w_i) \right]
    \geq \E_{z_1,\dots,z_n}\left[ \CVaR_\mathcal{D}(w) \right],
  \end{align*}
  where the second inequality follows from Jensen's inequality.

  Next, for each $i$, we have
  \begin{align*}
    & \E_{z_i}[f_i(x^*)]
    = \frac{1}{\alpha} \E_{z_i}{[\ell(w^*; z_i) - \tau^*]}_+ + \tau^*  \\
    &= \frac{1}{\alpha} \E_{z} {[\ell(w^*; z) - \tau^*]}_+ + \tau^*  \tag{since $ (w^*, \tau^*) $ is a constant.} \\
    &= \CVaR_\mathcal{D}(w^*). \tag{by the definition of $\tau^*$}
  \end{align*}
  Thus,
  \begin{align*}
    \frac{1}{n}\E_{z_1,\dots,z_n}\left[\sum_{i=1}^n f_i(x^*) \right]
    = \frac{1}{n}\sum_{i=1}^n \E_{z_i}[f_i(x^*)]
    = \CVaR_\mathcal{D}(w^*).
  \end{align*}
  This completes the proof.
\end{proof}

\subsubsection{Online Gradient Descent}
We use online gradient descent~\citep{Zinkevich2003} as the OCO algorithm for the instantiation of the general framework.
\begin{algorithm}[h!]
  \caption{Online Gradient Descent for CVaR}\label{alg:SGD}
  \begin{algorithmic}[1]
    \REQUIRE The number of iterations $n$.
    \STATE Initialize $x_1 = (w_1,\tau_1) \in K \times [0,1]$ arbitrarily.
    \STATE $\eta \leftarrow \frac{\sqrt{D^2+1}}{G_\alpha\sqrt{n}}$.
    \FOR{$i=1,\dots,n$}
      \STATE Observe a sample $z_i$.
      \STATE Compute $g_i \in \partial_x f(x_i; z_i)$ and update $x_{i+1} = \proj_K(x_i - \eta g_i)$.
    \ENDFOR
    \RETURN $w = \frac{1}{n}\sum_{i=1}^n w_i$
  \end{algorithmic}
\end{algorithm}

\begin{theorem}\label{thm:convex-SGD}
Assume that the loss function $\ell(\cdot\ ; z)$ is convex, $G$-Lipschitz, and $\ell(w;z) \in [0,1]$ for all $w$ and $z$, and the feasible region $K \subseteq \bbR^d$ is a convex set such that $\norm{w-w'} \leq D$ for all $w,w' \in K$.
Then, Algorithm~\ref{alg:SGD} outputs $w \in K$ such that
\begin{align*}
    &\E_{z_1,\dots,z_n} [ \CVaR_\mathcal{D}(w) ] - \min_{w^* \in K} \CVaR_\mathcal{D}(w^*) \\
    &\quad\leq \frac{G_\alpha\sqrt{D^2+1}}{\sqrt{n}},
\end{align*}
where $G_\alpha = \Galpha$.
\end{theorem}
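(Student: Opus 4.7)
The plan is to combine Theorem~\ref{thm:cvar-generalization-via-regret} with a standard regret bound for online gradient descent applied to the auxiliary functions $f_i$. So I first need to argue that the OCO instance produced inside Algorithm~\ref{alg:SGD} satisfies the hypotheses of Zinkevich's analysis, and then plug the resulting regret bound into Theorem~\ref{thm:cvar-generalization-via-regret}.

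First I would check that each $f_i(w,\tau) = \frac{1}{\alpha}[\ell(w;z_i)-\tau]_+ + \tau$ is a convex function on $K \times [0,1]$: it is an affine combination of $\tau$ and the composition of the convex nonnegative-part with the convex function $\ell(\cdot\ ;z_i) - \tau$, so convexity is immediate. Second, by Lemma~\ref{lem:Lips_const}, each $f_i$ is $G_\alpha$-Lipschitz on $\bbR^d \times \bbR$, hence on $K \times [0,1]$ in particular. Third, the feasible set $K \times [0,1]$ is convex and its Euclidean diameter is at most $\sqrt{D^2 + 1}$, because for any $(w,\tau),(w',\tau') \in K \times [0,1]$ one has $\|(w,\tau)-(w',\tau')\|^2 = \|w-w'\|^2 + (\tau-\tau')^2 \le D^2 + 1$.

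Now I would quote the standard regret bound for online gradient descent~\citep{Zinkevich2003}: on a convex domain of diameter $\widetilde D$ against $\widetilde G$-Lipschitz convex losses, with the fixed step size $\eta = \widetilde D/(\widetilde G \sqrt{n})$, the regret after $n$ rounds is at most $\widetilde G \widetilde D \sqrt{n}$. Applying this with $\widetilde G = G_\alpha$ and $\widetilde D = \sqrt{D^2+1}$, and noting that the choice of $\eta$ in Algorithm~\ref{alg:SGD} matches exactly, I get
\[
\regret_A(z_1,\dots,z_n) \leq G_\alpha \sqrt{D^2+1}\,\sqrt{n}
\]
deterministically for every sample sequence. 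This trivially holds in expectation over $z_1,\dots,z_n$ as well.

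Finally, applying Theorem~\ref{thm:cvar-generalization-via-regret} to the output $w = \frac{1}{n}\sum_{i=1}^n w_i$ of Algorithm~\ref{alg:SGD}, I obtain
\[
\E_{z_1,\dots,z_n}[\CVaR_\caD(w)] - \min_{w^* \in K}\CVaR_\caD(w^*) \leq \frac{\E[\regret_A(z_1,\dots,z_n)]}{n} \leq \frac{G_\alpha \sqrt{D^2+1}}{\sqrt{n}},
\]
which is exactly the claimed bound. There is no real obstacle here; the work was done in Theorem~\ref{thm:cvar-generalization-via-regret} and Lemma~\ref{lem:Lips_const}, and the only thing to be careful about is tracking the Lipschitz constant $G_\alpha$ and the diameter $\sqrt{D^2+1}$ of the augmented feasible set $K \times [0,1]$ so that the step size $\eta$ matches Zinkevich's optimal choice.
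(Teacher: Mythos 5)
Your proposal is correct and follows essentially the same route as the paper: verify that the auxiliary functions $f_i$ are convex and $G_\alpha$-Lipschitz (Lemma~\ref{lem:Lips_const}), that $K\times[0,1]$ has diameter $\sqrt{D^2+1}$, invoke Zinkevich's regret bound for online gradient descent with the step size $\eta=\sqrt{D^2+1}/(G_\alpha\sqrt{n})$, and feed the resulting regret $G_\alpha\sqrt{D^2+1}\,\sqrt{n}$ into Theorem~\ref{thm:cvar-generalization-via-regret}. The only cosmetic difference is that you use the deterministic form of the regret bound while the paper bounds $\E[\norm{g_i}^2]$ before optimizing $\eta$ (and in doing so the paper drops a square on $G_\alpha$, a typo your write-up avoids); the substance is identical.
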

\begin{proof}
  By the regret guarantee of online gradient descent~\citep{Zinkevich2003}, we have
  \[
    \regret(z_1,\ldots,z_n) \leq \sum_{i=1}^n \frac{\eta}{2} \norm{g_i}^2 + \frac{{\diam(K \times [0,1])}^2}{2\eta}.
  \]
  Thus,
  \begin{align*}
    & \E_{z_1,\dots,z_n}[\regret(z_1,\ldots,z_n)]
    \leq \sum_{i=1}^n \frac{\eta}{2} \E_{z_1,\dots,z_n}[\norm{g_i}^2] + \frac{D^2 + 1}{2\eta}
    \leq G_\alpha \frac{\eta n}{2} + \frac{D^2 + 1}{2\eta},
  \end{align*}
  where, in the second inequality, we used $\E[\norm{g_i}^2] \leq G_\alpha$ by Lemma~\ref{lem:Lips_const}.
  Now the claim is immediate by Theorem~\ref{thm:cvar-generalization-via-regret} and the choice $\eta = \frac{\sqrt{D^2+1}}{G_\alpha\sqrt{n}}$.
\end{proof}

We remark that one can use online mirror descent~\citep{Hazan2016OCO} instead of online gradient descent to obtain a similar bound that yields a better parameter dependence for certain settings.
We omit the details here because it immediately follows from our general framework.

\subsection{Offline Algorithms}\label{subsec:offline}
In this section, we discuss the offline algorithms for minimizing CVaR.
We consider stochastic gradient descent (SGD):
In this method, we update $x_{t+1} = \proj_K(x_t - \eta g_t)$ for $t = 1, \dots, T$, where $g_t \in \partial_x f(x_t; z_t)$ and $z_t$ is a uniform random variable over given $n$ samples $z_1, \dots, z_n$.
Unfortunately, the above-mentioned framework based on OCO breaks down when $T > n$, i.e., when we use each sample more than once.
However, we can still prove a similar (but slightly worse) bound even in this setting with an additional smoothness assumption.

\subsubsection{Smooth Approximation to CVaR}
The technical difficulty in the offline setting is that the auxiliary function $f(x;z)$ is nonsmooth even if $\ell$ is smooth.
This prevents us from using a generalization bound of SGD~\citep{Hardt2016}.
We address this issue using a \emph{smoothed plus function}.

\begin{lemma}[Folklore]\label{lem:folklore}
  For any $\eps > 0$, there exists a $2/\eps$-smooth convex function $\rho_\eps(s)\colon \bbR\to\bbR_+$ such that ${[s]}_+ \leq \rho_\eps(s) \leq {[s]}_+ + \eps$ for any $s \in \bbR$.
\end{lemma}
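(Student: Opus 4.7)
The plan is to construct $\rho_\eps$ explicitly by bridging the kink of ${[s]}_+$ at $s = 0$ with a short quadratic segment. Specifically, I would set $\rho_\eps(s) = 0$ for $s \leq -\eps$, $\rho_\eps(s) = s$ for $s \geq \eps$, and glue them across $[-\eps,\eps]$ with the quadratic $(s+\eps)^2/(4\eps)$. This is the unique quadratic whose values and slopes agree with the two linear pieces at $s = \pm\eps$, so the resulting $\rho_\eps$ is automatically $C^1$.

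With the candidate in hand, there are three things to verify. First, convexity follows because each piece is convex and the derivative $\rho_\eps'$ is continuous and monotonically nondecreasing. Second, the second derivative vanishes outside $[-\eps,\eps]$ and equals $1/(2\eps)$ on the interior, so $\rho_\eps$ is $1/(2\eps)$-smooth, well within the allowed $2/\eps$. Third, I would verify the sandwich ${[s]}_+ \leq \rho_\eps(s) \leq {[s]}_+ + \eps$ case-by-case: outside $[-\eps,\eps]$ both inequalities hold with equality by construction, and inside the identity $(s+\eps)^2/(4\eps) - {[s]}_+ = {(|s|-\eps)}^2/(4\eps)$ shows that the gap is nonnegative and bounded by $\eps/4$.

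There is no real obstacle; the only subtle point is conceptual, not technical. The textbook way to smooth a convex function is via Moreau--Yosida regularization, but that smooths ${[s]}_+$ from \emph{below}, which would violate the lower bound $\rho_\eps \geq {[s]}_+$. The quadratic-bridge construction instead smooths from above, which is what the lemma requires; the bridge is placed symmetrically on $[-\eps,\eps]$ rather than on $[0,\eps]$ precisely so that the quadratic overshoots ${[s]}_+ = 0$ on $[-\eps,0]$ by enough to also cover ${[s]}_+ = s$ on $[0,\eps]$ without ever exceeding it by more than $\eps/4$.
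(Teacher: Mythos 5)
Your construction is exactly the paper's second example, the piecewise quadratic smoothed plus function of Alexander et al., since $(s+\eps)^2/(4\eps) = s^2/(4\eps) + s/2 + \eps/4$, and your verification of convexity, the $1/(2\eps)$-smoothness (which implies $2/\eps$-smoothness), and the sandwich bound with gap at most $\eps/4$ is correct. This matches the paper's approach; the paper simply exhibits the example without writing out the checks.
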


The examples of smoothed plus functions are the \emph{soft ReLu function}
\begin{align*}
  \rho_\eps(s) = \eps\log(1+e^{s/\eps})
\end{align*}
and the \emph{piecewise quadratic smoothed plus function}~\citep{Alexander2006}
\begin{align*}
  \rho_\eps(s) =
  \begin{cases}
  s, & (s \geq \eps) \\
  \frac{s^2}{4\eps} + \frac{s}{2} + \frac{\eps}{4}, & (-\eps \leq s \leq \eps) \\
  0. & (s \leq -\eps)
  \end{cases}
\end{align*}
Note that for both choices, $\rho_\eps$ is $2/\eps$-smooth.

We fix $\rho_\eps$ to be a function satisfying the condition in Lemma~\ref{lem:folklore}.
Now, we define a \emph{smoothed auxiliary function} $\tilde{f}_\eps(\cdot\ ; z)\colon \mathbb{R}^d \times [0,1] \to \mathbb{R}$ as
\[
  \tilde{f_\eps}(x=(w,\tau);z) = \frac{1}{\alpha}\rho_\eps(\ell(w;z) - \tau) + \tau.
\]

\begin{lemma}\label{lem:smoothing}
  If $\ell(\cdot\ ;z)$ is $G$-Lipschitz and $\beta$-smooth, then
  \begin{enumerate}
    \item $f(x;z) \leq \tilde{f_\eps}(x;z) \leq  f(x;z) + \eps$ for all $x$ and $z$,
    \item $f(x;z)$ is $G_\alpha$-Lipschitz.
    \item $\tilde{f_\eps}(x;z)$ is $\betaalpeps$-smooth for all $z$.
  \end{enumerate}
\end{lemma}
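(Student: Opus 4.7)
The plan is to dispatch the three claims in order, with (1) and (2) being essentially immediate from what is already on the page and the real work concentrated in (3). For (1), I would unfold the definitions so that
\[
    \tilde{f}_\eps(x;z) - f(x;z) = \frac{1}{\alpha}\bigl(\rho_\eps(\ell(w;z)-\tau) - {[\ell(w;z)-\tau]}_+\bigr),
\]
and then apply Lemma~\ref{lem:folklore} pointwise with $s = \ell(w;z)-\tau$; both inequalities drop out in one line (up to the harmless factor $1/\alpha$, which can be absorbed by rescaling $\eps$ before invoking Lemma~\ref{lem:folklore}). Claim (2) is exactly the content of Lemma~\ref{lem:Lips_const}, so I would simply cite it.

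For (3), I would write $g(w,\tau) := \ell(w;z) - \tau$, so $\tilde{f}_\eps(w,\tau;z) = \frac{1}{\alpha}\rho_\eps(g(w,\tau)) + \tau$ and the linear term $\tau$ contributes nothing to the Hessian. By the chain rule,
\[
    \nabla^2\tilde{f}_\eps = \frac{1}{\alpha}\bigl(\rho_\eps''(g)\,\nabla g\,\nabla g^\top + \rho_\eps'(g)\,\nabla^2 g\bigr),
\]
where $\nabla g = (\nabla_w\ell(w;z),-1)^\top$ and $\nabla^2 g$ is block-diagonal with $\nabla_w^2\ell(w;z)$ in the $w$ block and zeros elsewhere. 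I would then bound the operator norm by the triangle inequality: the rank-one term is controlled by $\rho_\eps'' \leq 2/\eps$ (Lemma~\ref{lem:folklore}) together with $\|\nabla g\|^2 \leq G^2+1$ coming from the $G$-Lipschitz assumption on $\ell$; the second term is controlled by $0 \leq \rho_\eps' \leq 1$ together with $\|\nabla^2 g\|_{\mathrm{op}} \leq \beta$ from $\beta$-smoothness of $\ell$. Summing gives $\|\nabla^2\tilde{f}_\eps\|_{\mathrm{op}} \leq \betaalpeps$ (the extra $+1$ from the $\tau$-coordinate of $\nabla g$ is absorbed into the leading constants), and the standard characterization of smoothness by a uniform Hessian operator-norm bound finishes the proof.

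The main thing to watch will be the uniform bound $0 \leq \rho_\eps' \leq 1$, since this is what prevents the coefficient of $\nabla^2\ell$ from blowing up and gives a clean $\beta/\alpha$ contribution. For both concrete smoothed plus functions displayed just before the statement, one-line differentiation gives $\rho_\eps' \in [0,1]$, so I would use this route directly; if one wanted a purely black-box argument from the properties in Lemma~\ref{lem:folklore}, convexity of $\rho_\eps$ combined with the sandwich $[s]_+ \leq \rho_\eps(s) \leq [s]_+ + \eps$ forces the derivatives into $[0,1]$ in the limits, and $\rho_\eps'' \leq 2/\eps$ then pins them down throughout. This is the only mildly delicate point in the proof.
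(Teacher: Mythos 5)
Your handling of claims (1) and (2) matches the paper's in substance, and on one point you are more careful: since $\tilde f_\eps - f = \frac{1}{\alpha}\bigl(\rho_\eps(s)-{[s]}_+\bigr)\in[0,\eps/\alpha]$, the stated sandwich $\tilde f_\eps \le f+\eps$ really does need the rescaling of $\eps$ you mention (or one must accept $\eps/\alpha$); the paper just calls this ``immediate.'' One caution on (2): the paper's own proof does not cite Lemma~\ref{lem:Lips_const} but instead bounds $\norm{\nabla\tilde f_\eps}$ using $\dot{\rho}_\eps\in[0,1]$, i.e., it establishes that $\tilde f_\eps$ (not $f$) is $G_\alpha$-Lipschitz --- which is the property actually needed when Lemma~\ref{lem:opterr} invokes the SGD analysis with $L=G_\alpha$. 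If, as seems likely, the statement of item~2 contains a typo and is meant for $\tilde f_\eps$, your citation of Lemma~\ref{lem:Lips_const} alone does not cover it; the bound $\dot{\rho}_\eps\in[0,1]$ you derive for part (3) supplies it immediately by the same case analysis.

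For (3) you take a genuinely different route: a pointwise Hessian bound via the chain rule, versus the paper's first-order argument that directly bounds $\norm{\nabla\tilde f_\eps(x)-\nabla\tilde f_\eps(x')}$ by splitting $\dot{\rho}_\eps(\cdot)\nabla\ell(w)-\dot{\rho}_\eps(\cdot)\nabla\ell(w')$ with the product rule for differences and using $\dot{\rho}_\eps\in[0,1]$, $\abs{\dot{\rho}_\eps(s)-\dot{\rho}_\eps(s')}\le (2/\eps)\abs{s-s'}$, $\norm{\nabla\ell}\le G$, and $\beta$-smoothness of $\ell$. The Hessian route is cleaner when it applies, but it carries a hypothesis gap: $\beta$-smoothness of $\ell$ and $2/\eps$-smoothness of $\rho_\eps$ guarantee only Lipschitz gradients, not twice differentiability --- indeed the piecewise-quadratic $\rho_\eps$ displayed in the paper is not $C^2$ at $s=\pm\eps$ --- so $\rho_\eps''$ and $\nabla^2\ell$ need not exist pointwise. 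To make your argument rigorous you would need either an added $C^2$ assumption or an almost-everywhere/mollification step, whereas the paper's first-order route works under the stated hypotheses. Separately, your constant is $\frac{1}{\alpha}\bigl(\beta+2(G^2+1)/\eps\bigr)$ because $\norm{\nabla g}^2=\norm{\nabla_w\ell}^2+1$; the extra $2/(\alpha\eps)$ is not literally absorbed into $\betaalpeps$, though it is immaterial for every downstream big-$O$ bound (and the paper's own bookkeeping when recombining the two gradient blocks is comparably loose). Your black-box derivation of $\rho_\eps'\in[0,1]$ from convexity plus the sandwich is correct and fills in a step the paper asserts without proof.
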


The \emph{smoothed CVaR} is defined as
\[
  \CVaRtilde_{\caD, \eps}(w) = \min_{\tau} \E_{z \sim \caD} \tilde{f_\eps}(w,\tau; z).
\]

\begin{lemma}\label{lem:smoothing-gap-CvaR}
  For any $w \in \mathbb{R}^d$, $\CVaR_\caD(w) \leq \CVaRtilde_{\caD, \eps}(w) \leq \CVaR_\caD(w) + \eps$.
\end{lemma}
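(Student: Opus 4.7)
The plan is to leverage the pointwise sandwich inequality $f(x;z) \leq \tilde{f}_\eps(x;z) \leq f(x;z) + \eps$ from item 1 of Lemma~\ref{lem:smoothing}, push it through expectation over $z \sim \caD$ by monotonicity and linearity, and then take the minimum over $\tau$ on both sides. Since both $\CVaR_\caD(w)$ and $\CVaRtilde_{\caD,\eps}(w)$ are expressed as minima of expectations of $f$ and $\tilde{f}_\eps$ respectively (recall $\CVaR_\caD(w) = \min_\tau \E_z f(w,\tau;z)$ by Lemma~\ref{lem:Rockafellar2000}, and $\CVaRtilde_{\caD,\eps}(w) = \min_\tau \E_z \tilde{f}_\eps(w,\tau;z)$ by definition), the passage from a pointwise inequality in $\tau$ to the minimum is completely mechanical.

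First I would record the lower bound. For every $\tau \in \bbR$, pointwise monotonicity gives $\E_z f(w,\tau;z) \leq \E_z \tilde{f}_\eps(w,\tau;z)$. Fix an arbitrary $\tau$; then $\min_{\tau'} \E_z f(w,\tau';z) \leq \E_z f(w,\tau;z) \leq \E_z \tilde{f}_\eps(w,\tau;z)$. Taking the infimum over $\tau$ on the rightmost side yields $\CVaR_\caD(w) \leq \CVaRtilde_{\caD,\eps}(w)$.

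Next I would handle the upper bound. Again for every $\tau$, the second half of the sandwich gives $\E_z \tilde{f}_\eps(w,\tau;z) \leq \E_z f(w,\tau;z) + \eps$. Taking $\min_\tau$ on both sides and using that the constant $\eps$ commutes with the minimum yields $\CVaRtilde_{\caD,\eps}(w) \leq \min_\tau \E_z f(w,\tau;z) + \eps = \CVaR_\caD(w) + \eps$.

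There is essentially no obstacle; the only thing worth double-checking is that both minima over $\tau$ are attained (or at least that the infimum argument is valid), which follows from the fact that $\tau \mapsto \E_z f(w,\tau;z)$ is convex with the right growth at infinity (it goes to $+\infty$ as $|\tau|\to\infty$ because $\ell$ takes values in $[0,1]$), and similarly for $\tilde{f}_\eps$. Once this is noted, the two paragraphs above together with the invocation of Lemma~\ref{lem:smoothing} complete the proof.
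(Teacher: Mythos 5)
Your proposal is correct and follows essentially the same route as the paper: push the pointwise sandwich from Lemma~\ref{lem:smoothing} through the expectation and then through the minimum over $\tau$. The only addition is your remark on attainment of the minima, which the paper does not belabor and is not needed since the infimum argument suffices.
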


\subsubsection{Stochastic Gradient Descent}
Now, we describe our SGD algorithm in Algorithm~\ref{alg:batch-SGD} and provide its generalization bound.

\begin{algorithm}[h!]
  \caption{Smoothed Stochastic Gradient Descent for CVaR}\label{alg:batch-SGD}
  \begin{algorithmic}[1]
    \REQUIRE Samples $S = \{z_1,\dots,z_n\}$ and the number of iterates $T$
    \STATE Initialize $x_1 = (w_1,\tau_1) \in K \times [0,1]$ arbitrarily.
    \STATE Set $\eta \gets \frac{\sqrt{D^2+1} \sqrt{n}}{G_\alpha \sqrt{T(n+2T)}}$ and $\eps \gets 2G_\alpha^2 \eta$.
    \FOR{$t=1,\dots,T$}
      \STATE Sample $z_t \sim S$ and take $g_t \in \partial_x \tilde{f}_\eps(x_t;z_t)$
      \STATE Update $x_{t+1} = \proj_K(x_t - \eta g_t)$
    \ENDFOR
    \RETURN $w = \frac{1}{T}\sum_{t=1}^{T} w_t$
  \end{algorithmic}
\end{algorithm}

\begin{theorem}\label{thm:convex-smooth-SGD}
Assume that the same assumption as in Theorem~\ref{thm:convex-SGD} holds and  $\ell(\cdot\ ;z)$ is $\beta$-smooth for all $z$.
Suppose that we run Algorithm~\ref{alg:batch-SGD} over a set $S$ of $n$ samples with $T=cn$ for $c > 0$, and let $w$ be the average of $T$ iterations of SGD\@.
If $c$ is sufficiently large such that
\begin{align}\label{eq:cond-on-c}
\frac{\sqrt{D^2+1}}{G_\alpha \sqrt{c(1+2c)n}} \leq \frac{\alpha}{\beta},
\end{align}
then
\begin{align*}
  & \E_{S,w}[\CVaR_\caD(w) - \min_{w^*\in K}\CVaR_S(w^*)]
  \leq \frac{G_\alpha \sqrt{D^2+1}}{\sqrt{n}} \left(\sqrt{\frac{1+2c}{c}} + \frac{2}{\sqrt{c(1+2c)}} \right).
\end{align*}
where the expectation is taken over $S$ and the randomness in the algorithm.
\end{theorem}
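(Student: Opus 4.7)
The plan is to run SGD on the smoothed auxiliary loss $\tilde{f}_\eps$ and then transfer the resulting guarantee back to the unsmoothed CVaR via Lemmas~\ref{lem:smoothing} and~\ref{lem:smoothing-gap-CvaR}. Write $\bar{x} = (\bar{w},\bar{\tau}) = \frac{1}{T}\sum_{t=1}^{T} x_t$ for the averaged iterate, and $\tilde{F}_\eps^{\mu}(x) = \E_{z \sim \mu}[\tilde{f}_\eps(x; z)]$ for $\mu \in \{\caD, S\}$; the analogous notation $F^\mu$ is used for $f$ in place of $\tilde f_\eps$.

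First, I would reduce the quantity of interest to an excess risk for the smoothed loss. By Lemma~\ref{lem:smoothing-gap-CvaR}, $\CVaR_\caD(\bar w) \leq \CVaRtilde_{\caD,\eps}(\bar w) \leq \tilde{F}_\eps^\caD(\bar x)$, where the last step uses that $\CVaRtilde_{\caD,\eps}$ is a minimum over $\tau$. On the other side, $f \leq \tilde f_\eps$ from Lemma~\ref{lem:smoothing} yields $\min_{x^*}\tilde{F}_\eps^S(x^*) \geq \min_{x^*} F^S(x^*) = \min_{w^*}\CVaR_S(w^*)$. Combining the two bounds reduces the quantity in the theorem to $\E[\tilde{F}_\eps^\caD(\bar x) - \min_{x^*}\tilde{F}_\eps^S(x^*)]$, which I would then split into an optimization gap $\tilde{F}_\eps^S(\bar x) - \min_{x^*}\tilde{F}_\eps^S(x^*)$ and a generalization gap $\tilde{F}_\eps^\caD(\bar x) - \tilde{F}_\eps^S(\bar x)$.

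The optimization gap is controlled by the standard projected SGD analysis: $\|\nabla \tilde f_\eps\|\leq G_\alpha$ (as in Lemma~\ref{lem:Lips_const}) and diameter $\sqrt{D^2+1}$ for $K\times[0,1]$ give $\E[\tilde{F}_\eps^S(\bar x)-\min_{x^*}\tilde{F}_\eps^S(x^*)] \leq (D^2+1)/(2\eta T)+\eta G_\alpha^2/2$. The generalization gap is controlled by uniform algorithmic stability of SGD on smooth convex losses~\citep{Hardt2016}: by Lemma~\ref{lem:smoothing} the function $\tilde f_\eps$ is $\tilde\beta$-smooth with $\tilde\beta=(\beta+2G^2/\eps)/\alpha$, and whenever $\eta\tilde\beta\leq 2$, constant-stepsize SGD is $(2\eta T G_\alpha^2/n)$-uniformly stable, which yields the same bound in expectation on the generalization gap. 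Adding the extra $\eps$ slack from $f\leq \tilde f_\eps\leq f+\eps$ and substituting $\eta$, $\eps=2G_\alpha^2\eta$, and $T=cn$ from the algorithm, the three contributions combine into the stated rate after simplification.

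The main obstacle is that $\tilde\beta$ depends on $\eps$ which is in turn tied to $\eta$ by the algorithm, so the stability premise $\eta\tilde\beta\leq 2$ is not automatic. Substituting $\eps=2G_\alpha^2\eta$ and using $G_\alpha\geq G/\alpha$, a short calculation gives
\[
\eta\tilde\beta \;=\; \frac{\eta\beta}{\alpha} + \frac{G^2}{\alpha G_\alpha^2} \;\leq\; \frac{\eta\beta}{\alpha} + 1,
\]
which is at most $2$ exactly when $\eta\leq \alpha/\beta$, i.e., precisely condition~\eqref{eq:cond-on-c}. This certifies that the stability bound of~\citet{Hardt2016} may be invoked at the chosen step size, so the proof goes through once the three pieces are summed.
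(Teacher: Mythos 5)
Your proposal follows essentially the same route as the paper: smooth the plus function, apply the Hardt--Recht--Singer stability-plus-optimization bound for SGD to $\tilde{f}_\eps$, transfer back to the unsmoothed CVaR at a cost of $\eps$, and verify that the step-size/smoothness compatibility condition collapses (after substituting $\eps = 2G_\alpha^2\eta$ and $G_\alpha \geq G/\alpha$) to $\eta \leq \alpha/\beta$, i.e.\ to condition~\eqref{eq:cond-on-c}. Your direct computation of $\eta\tilde\beta \leq \eta\beta/\alpha + \alpha \leq 2$ is in fact a slightly cleaner version of the paper's check, which goes through $\frac{2\alpha}{\beta+2G^2/\eps} \geq \min\{\alpha/\beta,\ \alpha\eps/(2G^2)\}$.

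Two small inaccuracies are worth fixing. First, in the reduction step you invoke the wrong half of the sandwich: $f \leq \tilde f_\eps$ gives $\min_{x^*}\tilde F^S_\eps(x^*) \geq \min_{x^*}F^S(x^*)$, and from $A \leq A'$ together with $B \leq B'$ you cannot conclude $A - B \leq A' - B'$. What you actually need is the upper half, $\tilde f_\eps \leq f + \eps$, which gives $\min_{x^*}\tilde F^S_\eps(x^*) \leq \min_{w^*}\CVaR_S(w^*) + \eps$; this is precisely where the additive $\eps$ you later account for comes from, so the argument is repaired by citing that inequality instead. Second, the uniform-stability constant $2\eta T G_\alpha^2/n$ is the last-iterate bound; since the algorithm outputs the averaged iterate, the factor of $2$ can be removed (as the paper notes, following Theorem~4.7 of Hardt et al.), and the theorem's stated constant $\sqrt{(1+2c)/c} + 2/\sqrt{c(1+2c)}$ actually requires this sharper version --- with your constant the final bound comes out as $(3+3c)/\sqrt{c(1+2c)}$ rather than $(3+2c)/\sqrt{c(1+2c)}$, the same rate but a weaker constant. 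Neither issue affects the overall strategy, which is the paper's.
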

Note that
\[
\sqrt{\frac{1+2c}{c}} + \frac{2}{\sqrt{c(1+2c)}} \leq \frac{5\sqrt{3}}{3} \quad (\forall c \geq 1),
\]
and thus this bound is worse by only a constant factor compared to the online setting, that is, $T = n$ (Theorem~\ref{thm:convex-SGD}).
On the other hand, our offline bound holds even if $T > n$.

Next, we sketch the proof of this theorem.
The omitted proofs can be found in the appendix.

Let $R(\cdot) = \E_{z\sim\caD}[f(\cdot\ ;z)]$ and $R_S(\cdot) = \frac{1}{n}\sum_{i=1}^n f(\cdot\ ;z_i)$.
Further, we define smoothed versions $\tilde{R}_\eps(\cdot) = \E_{z\sim\caD}[\tilde{f}_\eps(\cdot\ ;z)]$ and $\tilde{R}_{S,\eps}(\cdot) = \frac{1}{n}\sum_{i=1}^n \tilde{f}_\eps(\cdot\ ;z)$.
Let $x = (w, \tau)$, where $\tau \in \argmin_{\tau} \E_{z\sim\caD}[f(w, \tau; z)]$ is the optimal threshold in the definition of CVaR with respect to $\caD$.

The first step is to apply the analysis of SGD~\cite{Hardt2016} to the smoothed auxiliary function $\tilde{f}_\eps(\cdot\ ; z)$.

\begin{lemma}\label{lem:opterr}
Suppose that $\eta \leq 2/(\beta + 2G^2/\eps)$.
Let $x_t$ be the iterate of Algorithm~\ref{alg:batch-SGD} and let $x = \frac{1}{T}\sum_{t=1}^T x_t$. Then,
\begin{align*}
&\E[\tilde{R}_\eps(x) - \min_{x^*}\tilde{R}_{S,\eps}(x^*)]
\leq \frac{1}{2}\left[ \frac{D^2 + 1}{\eta T} + \eta G_\alpha \left(1 + \frac{2T}{n} \right) \right].
\end{align*}
\end{lemma}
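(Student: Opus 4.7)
The plan is to decompose the target as the sum of a generalization gap and an empirical optimization error on the smoothed auxiliary objective, and then to bound each piece by a classical result on SGD applied to the smoothed auxiliary loss $\tilde f_\eps$. Writing $x = \tfrac1T\sum_{t=1}^T x_t$ for the averaged iterate,
\[
\tilde R_\eps(x) - \min_{x^*}\tilde R_{S,\eps}(x^*)
= \bigl(\tilde R_\eps(x) - \tilde R_{S,\eps}(x)\bigr)
+ \bigl(\tilde R_{S,\eps}(x) - \min_{x^*}\tilde R_{S,\eps}(x^*)\bigr).
\]
The ingredients I would harvest from Lemma~\ref{lem:smoothing} up front are that $\tilde f_\eps(\cdot\ ;z)$ is convex, $G_\alpha$-Lipschitz, and $\betaalpeps$-smooth, so the hypothesis $\eta\leq 2/(\beta+2G^2/\eps)$ supplies the step-size condition required by both of the standard analyses that follow.

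For the empirical optimization error, I would apply the textbook convergence rate for projected SGD on a convex loss. Because $z_t$ is drawn uniformly from $S$, the vector $g_t$ is an unbiased estimate of $\nabla \tilde R_{S,\eps}(x_t)$ and $\tilde R_{S,\eps}$ is convex; plugging the $G_\alpha$-Lipschitz bound and the squared diameter $\diam^2(K\times[0,1])=D^2+1$ into the standard SGD bound, and using Jensen's inequality to pass from the iterate average to the function value, yields
\[
\E\bigl[\tilde R_{S,\eps}(x) - \min_{x^*}\tilde R_{S,\eps}(x^*)\bigr] \leq \frac{D^2+1}{2\eta T} + \frac{\eta G_\alpha^2}{2}.
\]

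For the generalization gap, I would invoke the uniform algorithmic stability analysis of~\citet{Hardt2016} for SGD on smooth convex losses. The step-size hypothesis makes each projected gradient step on $\tilde f_\eps(\cdot\ ;z)$ nonexpansive, so running SGD for $T$ iterations on two samples that differ in a single point produces averaged outputs whose $\tilde f_\eps$-values differ by at most $O(G_\alpha^2\eta T/n)$; combined with the fact that uniform stability upper-bounds the expected generalization gap, this gives $\E[\tilde R_\eps(x)-\tilde R_{S,\eps}(x)]\leq G_\alpha^2\eta T/n$. Summing the two bounds and collecting the factor $1/2$ then produces the claimed right-hand side. The main obstacle I expect is pinning down the exact constants of the stability step: one must track carefully how much the trajectories diverge when a single sample is swapped, and verify that the bookkeeping yields precisely the multiplier $2T/n$ rather than a larger factor. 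Once this is settled, the decomposition and the two classical bounds fit together by direct addition.
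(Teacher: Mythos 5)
Your proof is correct and is essentially the paper's argument: the paper simply invokes the packaged bound of \citet{Hardt2016} for SGD on Lipschitz, smooth, convex losses, and that bound is proved exactly by the decomposition you give (uniform stability controlling $\E[\tilde R_\eps(x)-\tilde R_{S,\eps}(x)]\le \eta G_\alpha^2 T/n$ for the averaged iterate, plus the classical $\frac{D^2+1}{2\eta T}+\frac{\eta G_\alpha^2}{2}$ optimization error), whose sum is the stated right-hand side with $G_\alpha^2$ rather than the $G_\alpha$ in the lemma statement --- a typo, since the paper's own appendix also writes $G_\alpha^2$. The one caveat you inherit from the paper is the step-size condition: $\tilde f_\eps$ is $(\beta+2G^2/\eps)/\alpha$-smooth by Lemma~\ref{lem:smoothing}, so the nonexpansiveness/stability argument really requires $\eta\le 2\alpha/(\beta+2G^2/\eps)$, which is the condition the paper actually imposes when it later applies this lemma, not the weaker $\eta\le 2/(\beta+2G^2/\eps)$ stated in the hypothesis.
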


Then, by the previous lemma,
\begin{align*}
    &\E[R(x)]
    \leq \E[\tilde{R}_\eps(x)]
    \leq \E[\min_{x^*}\tilde{R}_{S,\eps}(x^*)]
    + \frac{1}{2}\left[ \frac{D^2 + 1}{\eta T} + \eta G_\alpha \left(1 + \frac{2T}{n} \right) \right]  \\
    &\leq \E[\min_{x^*}R_{S}(x^*)]
    + \frac{1}{2}\left[ \frac{D^2 + 1}{\eta T} + \eta G_\alpha \left(1 + \frac{2T}{n} \right) \right] + \eps.
\end{align*}

Now, we optimize $\eta$ and $\eps$.
Note that $\eta$ and $\eps$ must satisfy $\eta\leq 2\alpha/(\beta+2G^2/\eps)$ to apply Lemma~\ref{lem:opterr}.
The following lemma formalizes the parameter tuning.

\begin{lemma}\label{lem:paramtuning}
Suppose that we run Algorithm~\ref{alg:batch-SGD} for $T = c n$ with $c > 1$.
If $c$ is sufficiently large so that~\eqref{eq:cond-on-c} holds, then we can choose $\eta$ and $\eps$ satisfying $\eta\leq \frac{2\alpha}{\beta+2G^2/\eps}$ and
\begin{align*}
&\frac{1}{2}\left[ \frac{D^2 + 1}{\eta T} + \eta G_\alpha \left(1 + \frac{2T}{n} \right) \right] + \eps
\leq \frac{G_\alpha \sqrt{D^2+1}}{\sqrt{n}} \left(\sqrt{\frac{1+2c}{c}} + \frac{2}{\sqrt{c(1+2c)}} \right).
\end{align*}
\end{lemma}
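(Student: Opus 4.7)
The plan is direct verification: Algorithm~\ref{alg:batch-SGD} has already committed to specific choices of $\eta$ and $\eps$, so the lemma reduces to (i) checking that these choices satisfy the smoothness hypothesis of Lemma~\ref{lem:opterr}, and (ii) simplifying the resulting bound under $T=cn$.

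First, I would rewrite the prescribed stepsize. Since $\sqrt{T(n+2T)} = n\sqrt{c(1+2c)}$ when $T=cn$, the algorithm's settings become
\[
\eta = \frac{\sqrt{D^2+1}}{G_\alpha \sqrt{n}\sqrt{c(1+2c)}}, \qquad \eps = 2G_\alpha^2\eta = \frac{2G_\alpha\sqrt{D^2+1}}{\sqrt{n}\sqrt{c(1+2c)}}.
\]
For the admissibility condition $\eta \leq 2\alpha/(\beta+2G^2/\eps)$, clearing denominators and using $\eps = 2G_\alpha^2\eta$ reduces it to $\eta\beta + G^2/G_\alpha^2 \leq 2\alpha$. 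The key structural observation is that $G^2/G_\alpha^2 \leq \alpha$ in both branches of the definition of $G_\alpha$: when $G_\alpha = \sqrt{G^2+(1-\alpha)^2}/\alpha$, dropping the nonnegative $(1-\alpha)^2$ from the denominator gives $G^2/G_\alpha^2 = G^2\alpha^2/(G^2+(1-\alpha)^2) \leq \alpha^2 \leq \alpha$; when $G_\alpha = 1$, the definition forces $G^2+(1-\alpha)^2\leq\alpha^2$, hence $G^2\leq 2\alpha-1\leq \alpha$. It therefore suffices to ensure $\eta\beta \leq \alpha$, which is exactly the hypothesis~\eqref{eq:cond-on-c}.

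Second, I would substitute into the right-hand side of Lemma~\ref{lem:opterr}. The relation $\eps = 2G_\alpha^2\eta$ is calibrated precisely so that the optimization term $(D^2+1)/(\eta T)$ and the stability term $\eta G_\alpha^2(1+2T/n)$ balance; a short calculation shows that each of them evaluates to $G_\alpha\sqrt{D^2+1}\sqrt{1+2c}/\sqrt{cn}$, so their half-sum equals $(G_\alpha\sqrt{D^2+1}/\sqrt{n})\sqrt{(1+2c)/c}$. Adding $\eps$, which simplifies to $(G_\alpha\sqrt{D^2+1}/\sqrt{n})\cdot 2/\sqrt{c(1+2c)}$, reproduces the target bound on the nose.

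The only step that is more than bookkeeping is the inequality $G^2/G_\alpha^2 \leq \alpha$, which is what absorbs the gradient-variance contribution $2\eta G^2/\eps$ into the admissible slack $2\alpha$. Everything else is mechanical arithmetic, driven entirely by the balancing of $\eta$ and $\eps$ that is already hard-wired into Algorithm~\ref{alg:batch-SGD}; the role of condition~\eqref{eq:cond-on-c} is then transparently to guarantee that the stepsize still lies below $\alpha/\beta$ after this variance term has been subtracted off.
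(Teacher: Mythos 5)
Your proof is correct and follows essentially the same route as the paper: plug in the algorithm's hard-wired $\eta$ and $\eps=2G_\alpha^2\eta$, verify the step-size condition by splitting it into the $\beta$-part (which is exactly~\eqref{eq:cond-on-c}) and the $2G^2/\eps$-part (which the paper handles via the equivalent one-liner $G_\alpha\ge G/\alpha$, so $G^2/G_\alpha^2\le\alpha^2\le\alpha$), and then observe that the two terms in the bracket balance and $\eps$ contributes the $2/\sqrt{c(1+2c)}$ term. Your arithmetic in fact matches the lemma statement more faithfully than the paper's own write-up, which drops a $\sqrt{c}$ in its displayed expressions for $\eta$ and the final $\eps$-term.
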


Finally, the desired bound follows from
\begin{align*}
R(x) &=\min_{\tau} \E_{z\sim\caD}[f(w,\tau; z)] = \CVaR_\caD(w) \text{ and} \\
\min_{x^*} R_S(x^*) &\leq \min_{w^*}\CVaR_S(w^*). \qedhere
\end{align*}

\subsubsection{Minibatch SGD}
A well-known common practice in SGD is that rather than using a gradient estimator computed from one sample, we use a gradient estimator averaged in a minibatch.
Further, we analyze this variant of SGD (Algorithm~\ref{alg:minibatch-SGD}).
Let $b$ be the size of a minibatch.

\begin{algorithm}[h!]
  \caption{Smoothed Stochastic Gradient Descent for CVaR with Minibatch}\label{alg:minibatch-SGD}
  \begin{algorithmic}[1]
    \REQUIRE Samples $S = \{z_1,\dots,z_n\}$, the number of iterations $T$, and minibatch size $b$.
    \STATE Initialize $x_1 = (w_1,\tau_1) \in K \times [0,1]$ arbitrarily.
    \STATE Set $\eta \gets \frac{b\sqrt{D^2+1} \sqrt{n}}{G_\alpha \sqrt{T(n+2T)}}$ and $\eps \gets 2G_\alpha\eta$.
    \FOR{$t=1,\dots,T$}
      \STATE For $i = 1, \dots, b$, sample $z_i \sim S$ and compute $g_i \in \partial_x \tilde{f}_\eps(x_t;z_i)$.
      \STATE Let $g_t = \frac{1}{b}\sum_{i=1}^b g_i$ and update $x_{t+1} = \proj_K(x_t - \eta g_t)$.
    \ENDFOR
    \RETURN $w = \frac{1}{T}\sum_{t=1}^{T} w_t$
  \end{algorithmic}
\end{algorithm}

\begin{theorem}\label{thm:minibatch-SGD}
Assume that the same assumption as in Theorem~\ref{thm:convex-SGD} holds and  $\ell(\cdot\ ;z)$ is $\beta$-smooth for all $z$.
Suppose that we run Algorithm~\ref{alg:minibatch-SGD} over a set $S$ of $n$ samples with $T=cn$ for $c > 0$, and let $w$ be the average of $T$ iterates.
If $c$ is sufficiently large so that
\begin{align}\label{eq:cond-on-c-minibatch}
\frac{b\sqrt{D^2+1}}{G_\alpha \sqrt{c(1+2c)n}} \leq \frac{\alpha}{\beta},
\end{align}
then
\begin{align*}
  & \E_{S,w}[\CVaR_\caD(w) - \min_{w^*\in K}\CVaR_S(w^*)]
  \leq \frac{G_\alpha \sqrt{D^2+1}}{\sqrt{bn}} \left(\sqrt{\frac{1+2c}{c}} + \frac{2}{\sqrt{1+2c}} \right).
\end{align*}
where the expectation is taken over $S$ and the randomness in the algorithm.
\end{theorem}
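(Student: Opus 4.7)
The proof follows the three-step strategy of Theorem \ref{thm:convex-smooth-SGD}. The only change is that the single-sample stochastic gradient is replaced by the minibatch average $g_t = \frac{1}{b}\sum_{i=1}^b g_i$, which is unbiased for $\nabla \tilde R_{S,\eps}(x_t)$ and has second moment satisfying $\E[\norm{g_t}^2 \mid x_t] \leq \norm{\nabla \tilde R_{S,\eps}(x_t)}^2 + G_\alpha^2/b$ by the i.i.d.\ variance decomposition. This $1/b$ variance reduction is what ultimately produces the $1/\sqrt{b}$ improvement in the final bound.

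First I would establish a minibatch analog of Lemma \ref{lem:opterr}. Plugging the minibatch second-moment bound into the smooth-convex SGD descent lemma, using the $(\beta + 2G^2/\eps)/\alpha$-smoothness of $\tilde f_\eps$ from Lemma \ref{lem:smoothing}(3) together with the admissibility condition $\eta \leq 2\alpha/(\beta+2G^2/\eps)$ to absorb the $\norm{\nabla\tilde R_{S,\eps}(x_t)}^2$ term into the descent inequality, and propagating the minibatch estimator through the Hardt-Recht-Singer stability argument underlying Lemma \ref{lem:opterr}, I would arrive at a bound of the shape
\begin{align*}
\E[\tilde R_\eps(x) - \min_{x^*}\tilde R_{S,\eps}(x^*)] \leq \frac{D^2+1}{2\eta T} + \frac{\eta G_\alpha^2}{2b}\left(1+\frac{2T}{n}\right).
\end{align*}
Here the $1/b$ factor comes from the minibatch variance reduction, and the $2T/n$ dependence encodes the stability contribution.

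Second, I would chain through the smoothing approximation exactly as in the proof of Theorem \ref{thm:convex-smooth-SGD}: Lemma \ref{lem:smoothing}(1) yields $R(x) \leq \tilde R_\eps(x)$ and $\min_{x^*}\tilde R_{S,\eps}(x^*) \leq \min_{x^*} R_S(x^*) + \eps$, giving
\begin{align*}
\E[R(x)] - \E[\min_{x^*} R_S(x^*)] \leq \frac{D^2+1}{2\eta T} + \frac{\eta G_\alpha^2(1+2T/n)}{2b} + \eps.
\end{align*}
Then I would tune $\eta$ and $\eps$ in the spirit of Lemma \ref{lem:paramtuning}. The choice $\eta = \frac{b\sqrt{D^2+1}\sqrt{n}}{G_\alpha\sqrt{T(n+2T)}}$ specified by Algorithm \ref{alg:minibatch-SGD} balances the optimization and variance-plus-stability terms once the $1/b$ factor is accounted for, and the choice $\eps = 2G_\alpha\eta$ converts the admissibility condition $\eta \leq 2\alpha/(\beta+2G^2/\eps)$ into exactly the hypothesis \eqref{eq:cond-on-c-minibatch}. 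Substituting $T = cn$ and simplifying produces the two terms $\frac{G_\alpha\sqrt{D^2+1}}{\sqrt{bn}}\sqrt{(1+2c)/c}$ (from opt plus variance/stability) and $\frac{2G_\alpha\sqrt{D^2+1}}{\sqrt{bn(1+2c)}}$ (from $\eps$). The transfer from $R$ and $R_S$ to $\CVaR_\caD$ and $\CVaR_S$ is identical to the last step of the proof of Theorem \ref{thm:convex-smooth-SGD}: $R(x) = \CVaR_\caD(w)$ at the optimal $\tau$ and $\min_{x^*} R_S(x^*) \leq \min_{w^*} \CVaR_S(w^*)$.

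The main obstacle is the minibatch analog of Lemma \ref{lem:opterr}. The descent-lemma side is immediate from the variance decomposition, but the stability side is delicate: one must show that swapping a single sample between $S$ and $S'$ propagates through $T$ minibatch steps with the full $1/b$ benefit. The key computation is that, at each step, the swapped sample enters the minibatch with probability $\Theta(b/n)$, and when it does, it changes the gradient average by only $O(G_\alpha/b)$ because of the $1/b$ weight; combining these factors and using the non-expansiveness of the gradient step on smooth convex objectives gives the claimed $1/b$ improvement on the stability portion of the opterr bound, which is the sole reason for the $1/\sqrt{b}$ gain in the final theorem.
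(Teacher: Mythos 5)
Your proposal follows essentially the same route as the paper: the Hardt--Recht--Singer decomposition of the excess risk of the smoothed objective $\tilde{f}_\eps$ into an optimization-error term and a uniform-stability (generalization) term, followed by the smoothing comparison of Lemma~\ref{lem:smoothing}, a parameter tuning in the style of Lemma~\ref{lem:paramtuning}, and the final transfer $R(x)=\CVaR_\caD(w)$ and $\min_{x^*}R_S(x^*)\le\min_{w^*}\CVaR_S(w^*)$. The paper obtains the minibatch modification by proving a stability lemma for minibatch SGD (stability at most $\eta L^2 T/n$) and then substituting an ``effective'' Lipschitz constant $L=G_\alpha/\sqrt{b}$ into both the optimization and the stability terms; your variance-decomposition treatment of the optimization term, $\E[\norm{g_t}^2\mid x_t]\le\norm{\nabla\tilde{R}_{S,\eps}(x_t)}^2+G_\alpha^2/b$ absorbed via the descent lemma, is if anything more careful than that substitution.

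The one step where your stated reasoning does not close is exactly the step you flag as delicate: the $1/b$ gain in the stability term. As you compute it, the swapped sample enters the minibatch with probability $\Theta(b/n)$ per step and, when it does, perturbs the averaged gradient by $O(G_\alpha/b)$; the product is $O(G_\alpha/n)$, the same per-step drift as in the $b=1$ case, so this argument yields a stability contribution of order $\eta G_\alpha^2 T/n$ with no dependence on $b$ --- not the $\eta G_\alpha^2 T/(bn)$ that your final bound requires. (It also sits uneasily with your opening sentence, which attributes the entire $1/\sqrt{b}$ gain to the variance reduction in the optimization term.) To be fair, the paper's own stability lemma proves only $\eps\le\eta L^2T/n$ via precisely the cancellation you describe, and the $1/b$ enters there only through the unjustified substitution $L=G_\alpha/\sqrt{b}$ into that bound; so you have reproduced the structure of the argument faithfully, including its weakest link. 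But to prove the theorem as stated you need a genuine argument for why the stability term scales like $1/b$, or you must settle for a weaker bound in which only the $\eta G_\alpha^2/b$ optimization term, and not the stability term, enjoys the minibatch improvement.
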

We defer the proof to Appendix due to space limitations.

%!TEX root=./main.tex

\section{Nonconvex Loss}\label{sec:nonconvex}
We show that, even when the loss function is not convex, online gradient descent generalizes with respect to CVaR.
The algorithm is presented in Algorithm~\ref{alg:nonconvex-smoothed-SGD}.

\begin{algorithm}
\caption{Smoothed Online Gradient Descent for Nonconvex CVaR}\label{alg:nonconvex-smoothed-SGD}
\begin{algorithmic}[1]
    \REQUIRE The number of iterates $n$.
    \STATE Initialize $x_1 = (w_1,\tau_1) \in \bbR^d \times [0,1]$ arbitrarily.
    \STATE $\eps \gets G_\alpha^{2/3}G^{2/3}n^{-1/6}$, $\eta \gets  \frac{\alpha}{(\beta+G^2/\eps)G_\alpha^2\sqrt{n}}$.
    \FOR{$i=1,\dots,n$}
        \STATE Observe a sample $z_i$.
        \STATE Compute $g_i = \nabla_x \tilde{f_\eps}(x_i; z_i)$ and update $x_{i+1} = x_i - \eta g_i$.
    \ENDFOR
    \RETURN $(w,\tau) = (w_s,\tau_s)$, where $s$ is uniformly random over $[n]$.
\end{algorithmic}
\end{algorithm}

\begin{theorem}
  Assume that the feasible region $K$ is $\bbR^d$ (i.e., unconstrained) and the loss function $\ell(\cdot\ ;z)$ is $G$-Lipschitz and $\beta$-smooth for some $G,\beta>0$, and has a range $[0,1]$ for all $z$.
  Then, Algorithm~\ref{alg:nonconvex-smoothed-SGD} outputs $(w,\tau)$ such that
  \begin{align*}
    &\E_{w,\tau}\left[ \E_z [f(w, \tau; z)] - \CVaR_\caD(w) \right]
    \leq O\left( \frac{G_\alpha\beta^{1/2}}{n^{1/4}} + \frac{G_\alpha^{2/3}G^{2/3}}{n^{1/6}} \right).
  \end{align*}
\end{theorem}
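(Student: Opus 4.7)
The plan is to analyze Algorithm~\ref{alg:nonconvex-smoothed-SGD} as nonconvex SGD on the $L_\eps$-smooth surrogate $\tilde{F}_\eps(\cdot) := \E_{z\sim\caD}[\tilde{f}_\eps(\cdot;z)]$, where $L_\eps = (\beta+2G^2/\eps)/\alpha$ by Lemma~\ref{lem:smoothing}. One first obtains an average-gradient-norm guarantee from the standard descent lemma, then converts approximate first-order stationarity into the desired function-value gap by exploiting convexity of $\tilde{F}_\eps(w,\cdot)$ in $\tau$ together with the smoothing sandwich $F \leq \tilde{F}_\eps \leq F + \eps/\alpha$ from Lemma~\ref{lem:smoothing}, where $F(\cdot) := \E_z f(\cdot;z)$.

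First, since $g_i = \nabla_x \tilde{f}_\eps(x_i;z_i)$ is unbiased for $\nabla\tilde{F}_\eps(x_i)$ with $\norm{g_i}\leq G_\alpha$ (Lemmas~\ref{lem:Lips_const} and~\ref{lem:smoothing}), the descent inequality for $L_\eps$-smooth functions, followed by conditional expectation and telescoping over $i=1,\dots,n$, yields
\begin{equation*}
\frac{1}{n}\sum_{i=1}^n \E\norm{\nabla\tilde{F}_\eps(x_i)}^2 \leq \frac{\tilde{F}_\eps(x_1) - \min\tilde{F}_\eps}{\eta n} + \frac{\eta L_\eps G_\alpha^2}{2}.
\end{equation*}
Because $\tilde{F}_\eps$ is bounded (losses take values in $[0,1]$), the prescribed $\eta = \alpha/((\beta+G^2/\eps) G_\alpha^2 \sqrt{n})$ balances the two terms and forces the right-hand side to be $O(L_\eps G_\alpha^2/\sqrt{n})$. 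Since the returned $(w,\tau) = (w_s,\tau_s)$ is a uniformly random iterate, the same bound transfers to $\E\norm{\nabla\tilde{F}_\eps(w,\tau)}^2$.

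Next, to convert gradient control into a function-value gap, observe that $\tilde{f}_\eps(w,\cdot;z)$ is convex in $\tau$ (composition of the convex $\rho_\eps$ with an affine map plus a linear term), and hence so is $\tilde{F}_\eps(w,\cdot)$. Setting $\tilde{\tau}^*(w) := \argmin_\tau \tilde{F}_\eps(w,\tau)$, the smoothing sandwich and the convexity subgradient inequality give
\begin{equation*}
\E_z[f(w,\tau;z)] - \CVaR_\caD(w) \leq \tilde{F}_\eps(w,\tau) - \tilde{F}_\eps(w,\tilde{\tau}^*(w)) + \eps/\alpha \leq \norm{\nabla\tilde{F}_\eps(w,\tau)}\cdot\abs{\tau - \tilde{\tau}^*(w)} + \eps/\alpha.
\end{equation*}
Taking expectations and applying Cauchy-Schwarz reduces matters to an $O(1)$ bound on $\E\abs{\tau - \tilde{\tau}^*(w)}^2$.

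The main technical obstacle is precisely controlling the $\tau$-iterates, since Algorithm~\ref{alg:nonconvex-smoothed-SGD} applies no projection. The key observation is that $\tilde{F}_\eps$ is coercive in $\tau$: using $\rho_\eps(s)\geq [s]_+$ and $\ell\in[0,1]$, one has $\tilde{F}_\eps(w,\tau)\geq \tau$ for $\tau\geq 0$ and $\tilde{F}_\eps(w,\tau)\geq (1/\alpha - 1)(-\tau)$ for $\tau\leq 0$, so $\tilde{F}_\eps$ grows linearly in $\abs{\tau}$ with an $\alpha$-dependent slope. Combined with the descent-lemma control on $\E\tilde{F}_\eps(x_t)$ (which stays of order $1/\alpha$ throughout under the prescribed $\eta$) and the fact that $\tilde{\tau}^*(w)\in[0,1]$, this caps $\E\abs{\tau_s - \tilde{\tau}^*(w_s)}^2$ by an absolute constant depending only on $\alpha$. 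Plugging back yields $\E[F(w,\tau) - \CVaR_\caD(w)] = O(G_\alpha\sqrt{L_\eps}/n^{1/4} + \eps/\alpha)$, and using $\sqrt{L_\eps}\leq \sqrt{\beta/\alpha} + \sqrt{2G^2/(\alpha\eps)}$ together with the prescribed $\eps = G_\alpha^{2/3}G^{2/3}/n^{1/6}$ equalizes the two error sources and produces the advertised bound $O(G_\alpha\beta^{1/2}/n^{1/4} + G_\alpha^{2/3}G^{2/3}/n^{1/6})$.
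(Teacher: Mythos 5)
Your proposal follows essentially the same route as the paper's proof: a descent-lemma bound on the average squared gradient norm of the smoothed surrogate, conversion to a function-value gap via convexity of $\tilde{f}_\eps$ in $\tau$ plus Cauchy--Schwarz, the smoothing sandwich to pass back to $f$ and $\CVaR_\caD$, and the same choice $\eps = G_\alpha^{2/3}G^{2/3}n^{-1/6}$ to balance the two error sources. The decomposition, the exponents, and the final balancing all match.

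The one weak link is your control of $\E\abs{\tau_s - \tilde{\tau}^*(w_s)}^2$. Your coercivity observation ($\tilde{F}_\eps(w,\tau) \gtrsim_\alpha \abs{\tau}$ for large $\abs{\tau}$) combined with boundedness of $\E[\tilde{F}_\eps(x_t)]$ along the trajectory only yields a bound on the \emph{first} moment $\E\abs{\tau_t}$, whereas the Cauchy--Schwarz step requires the \emph{second} moment; linear growth of the objective does not cap $\E[\tau_t^2]$ by a constant. To be fair, the paper itself silently drops the factor $\sqrt{\E_i[(\tau_i-\tau^*)^2]}$ as if it were at most $1$, even though Algorithm~\ref{alg:nonconvex-smoothed-SGD} applies no projection, so you are at least more honest about the issue. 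A clean fix is deterministic rather than in expectation: since $\dot{\rho}_\eps(s)=1$ for $s\geq\eps$ and $\dot{\rho}_\eps(s)=0$ for $s\leq-\eps$ (for the piecewise quadratic smoothing) and $\ell\in[0,1]$, the update $\tau_{i+1}=\tau_i-\eta(1-\dot{\rho}_\eps(\ell-\tau_i)/\alpha)$ pushes $\tau$ strictly downward whenever $\tau_i>1+\eps$ and strictly upward whenever $\tau_i<-\eps$, so starting from $\tau_1\in[0,1]$ every iterate stays in $[-\eps-\eta G_\alpha,\,1+\eps+\eta G_\alpha]$, which bounds $\abs{\tau_i-\tilde{\tau}^*(w_i)}$ by $O(1)$ almost surely. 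With that substitution your argument is complete and matches the paper's bound.
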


\begin{proof}
  Let $(w_i, \tau_i)$ be the iterate of the algorithm for $i=1,\dots, n$.
  By standard analysis of the online gradient descent for nonconvex smooth functions (e.g., see~\citet[Appendix~B]{Allen2018}), we have
  \[
    \E_{i\sim [n]}\E_z[\norm{\nabla \tilde{f_\eps}(w_i, \tau_i; z)}_2^2] \leq O\left(\frac{G_\alpha^2(\beta+G^2/\eps)}{\alpha\sqrt n} \right).
  \]
  Let us define $\tau_i^* \in \argmin_\tau \E_z[\tilde{f_\eps}(w_i,\tau;z)]$ for $i\in [n]$.
  Now, because $\tilde{f_\eps}(w, \tau; z)$ is convex in $\tau$, we have
  \begin{align*}
    & \E_{i\sim[n]} \left[\E_z[\tilde{f_\eps}(w_i, \tau_i;z)] - \CVaRtilde_{\caD,\eps}(w_i) \right]
    = \E_{i,z}[\tilde{f_\eps}(w_i, \tau_i;z) - \tilde{f_\eps}(w_i, \tau^*;z)] \\
    &\leq \E_{i,z}[\nabla_\tau \tilde{f_\eps}(w_i, \tau_i;z) (\tau_i-\tau^*)] \tag{by convexity}\\
    &\leq \sqrt{\E_{i,z}[(\nabla_\tau \tilde{f_\eps}(w_i, \tau_i;z))^2]} \sqrt{\E_i[(\tau_i-\tau^*)^2]} \tag{by Cauchy-Schwartz}\\
    &\leq \sqrt{\E_{i,z}[\norm{\nabla \tilde{f_\eps}(w_i, \tau_i;z)}_2^2]}
    = O\left(\frac{G_\alpha{(\beta + G^2/\eps)}^{1/2}}{\alpha^{1/2}n^{1/4}}\right)
    = O\left(\frac{G_\alpha\beta^{1/2}}{\alpha^{1/2}n^{1/4}} + \frac{ G_\alpha G}{\eps^{1/2}\alpha^{1/2}n^{1/4}} \right).
  \end{align*}
  Then, omitting the $\alpha^{-1/2}$ factor, we have
  \begin{align*}
    & \E_{i\sim[n]}\left[ \E_z[f(w_i, \tau_i;z)] - \CVaR_\caD(w_i) \right] \\
    &= \E_{i,z}[\tilde{f_\eps}(w_i, \tau_i;z) - \CVaRtilde_{\caD,\eps}(w_i)]
    + \E_{i,z}[f(w_i, \tau_i;z) -\tilde{f_\eps}(w_i, \tau_i;z)]
    +  \E_i[\CVaRtilde_{\caD,\eps}(w_i) - \CVaR_\caD(w_i)] \\
    & = O\left(\frac{G_\alpha\beta^{1/2}}{n^{1/4}} + \frac{G_\alpha G}{\eps^{1/2} n^{1/4}}  + \eps\right).
  \end{align*}
Setting $\eps = G_\alpha^{2/3}G^{2/3}n^{-1/6}$ completes the proof.
\end{proof}

%!TEX root=./main.tex

\begin{figure*}[t!]
  \centering
  \subfigure[CVaR$_{0.05}$]{\includegraphics[width=.425\hsize]{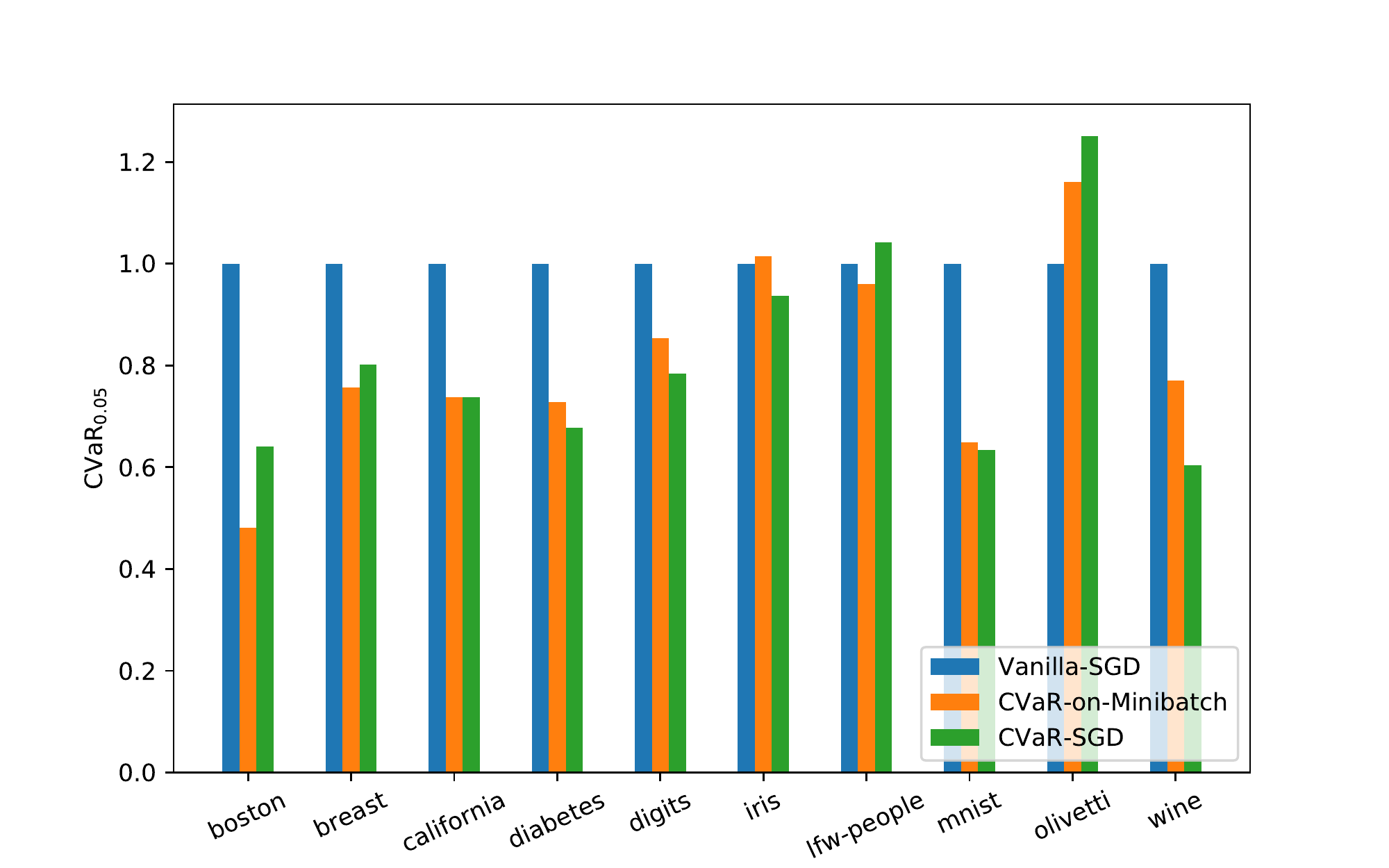}}
  \subfigure[CVaR$_{0.1}$]{\includegraphics[width=.425\hsize]{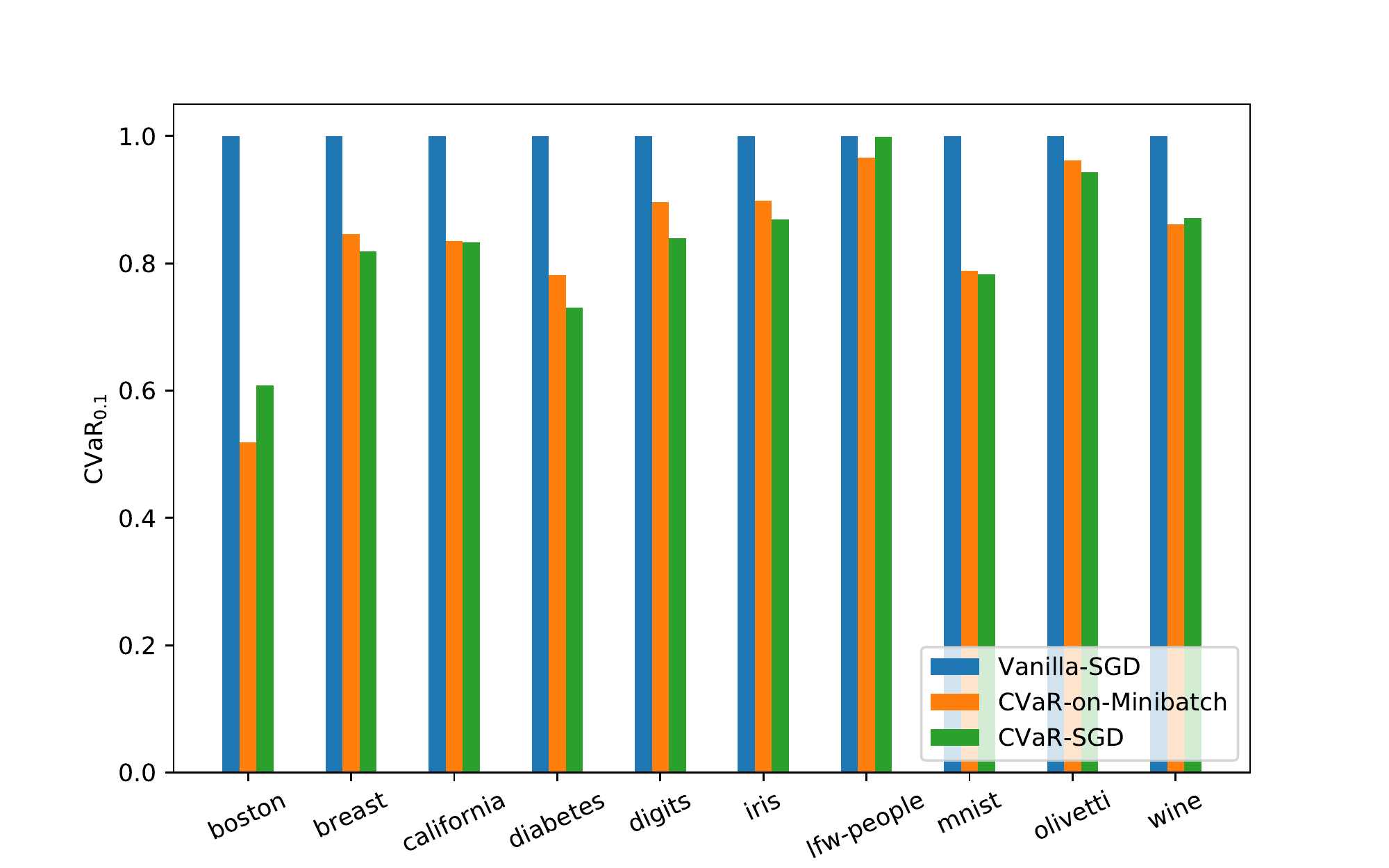}}
  \subfigure[Accuracy]{\includegraphics[width=.425\hsize]{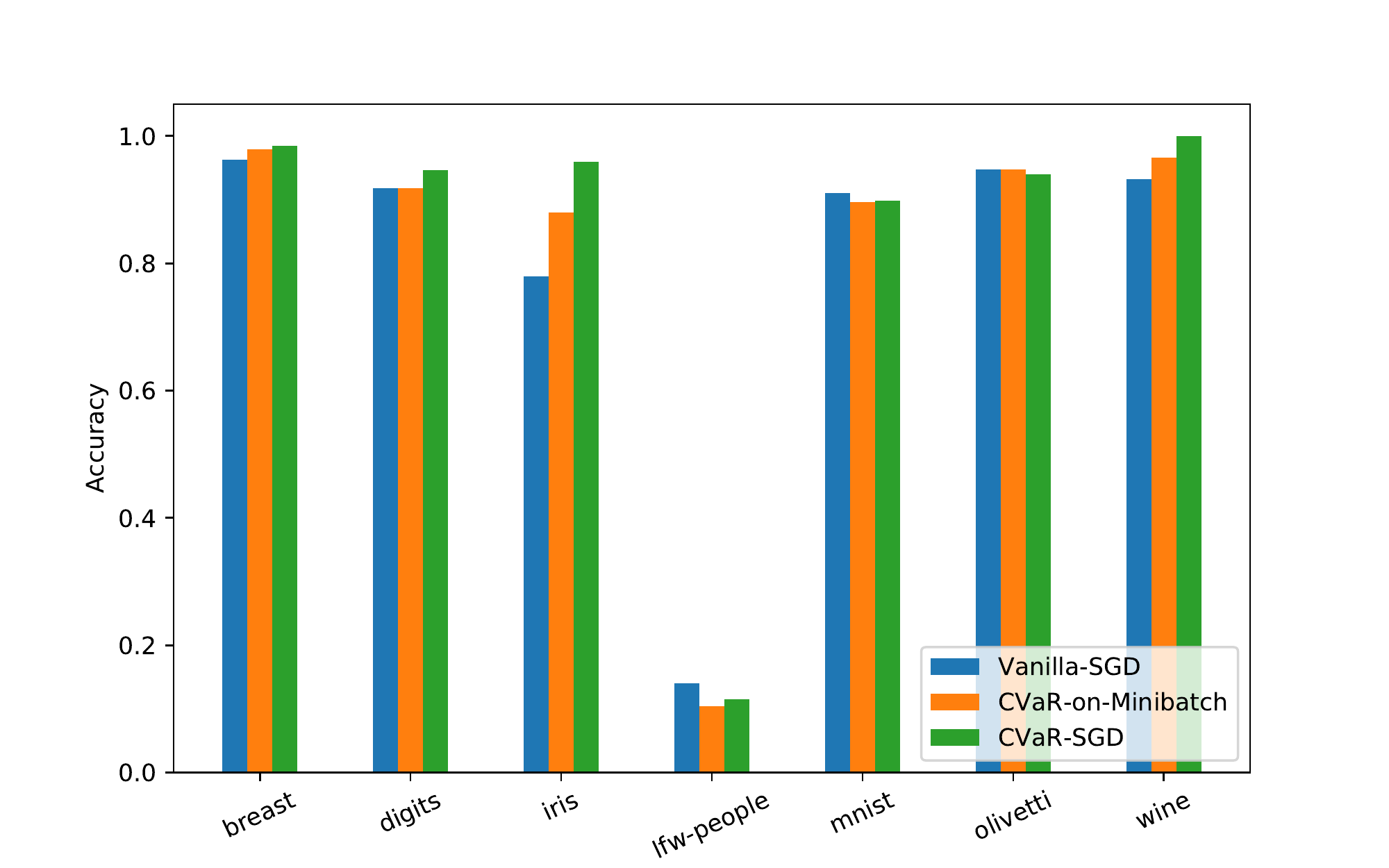}}
  \subfigure[Loss]{\includegraphics[width=.425\hsize]{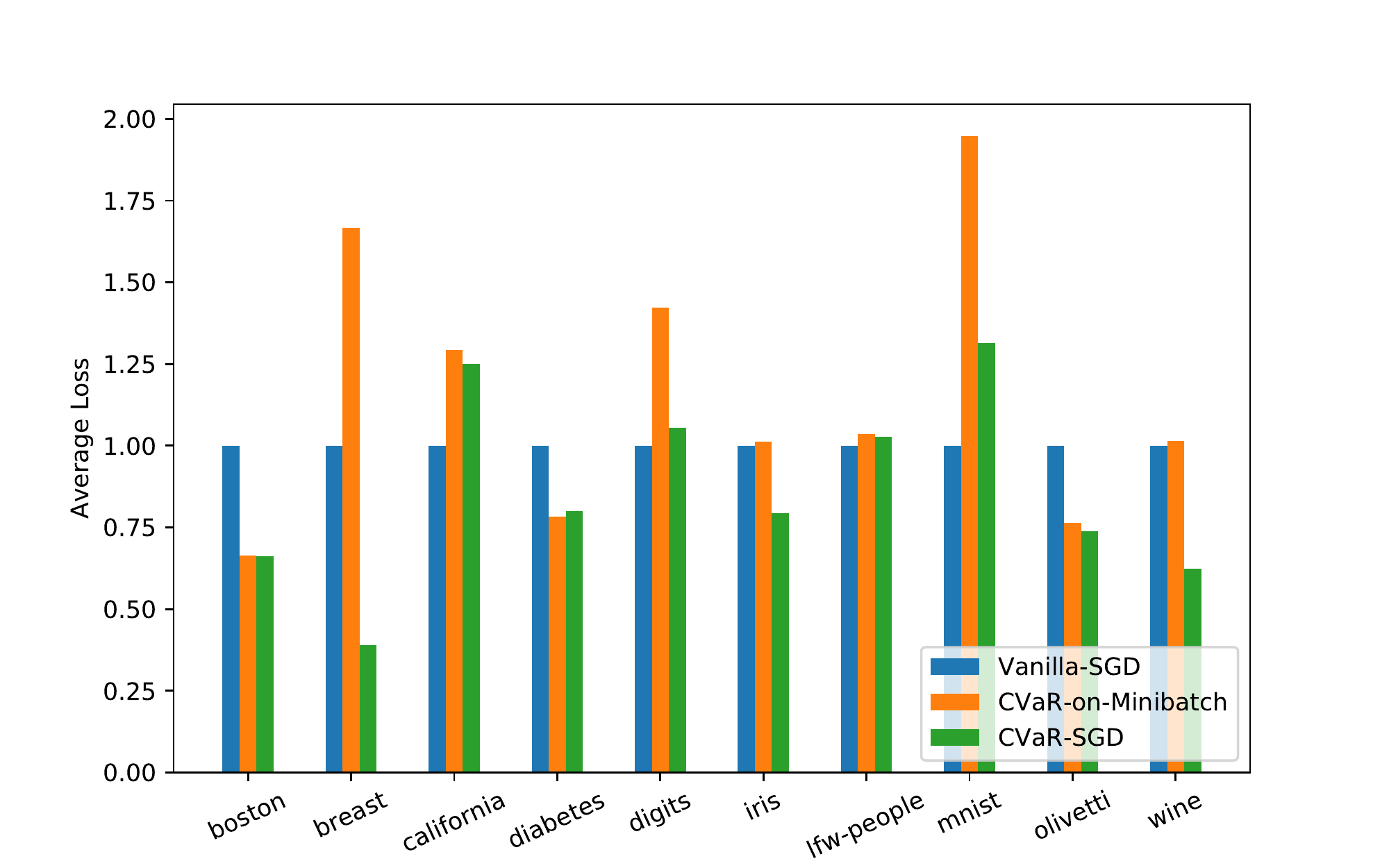}}
  \caption{Summary of the experimental results for linear models. For comparison, the CVaR$_{0.05}$, CVaR$_{0.05}$, and average loss of \textsf{Vanilla-SGD} are normalized to one.}\label{fig:summary}
\end{figure*}

\begin{figure*}[t!]
  \centering
  \subfigure[CVaR$_{0.1}$]{\includegraphics[width=.32\hsize]{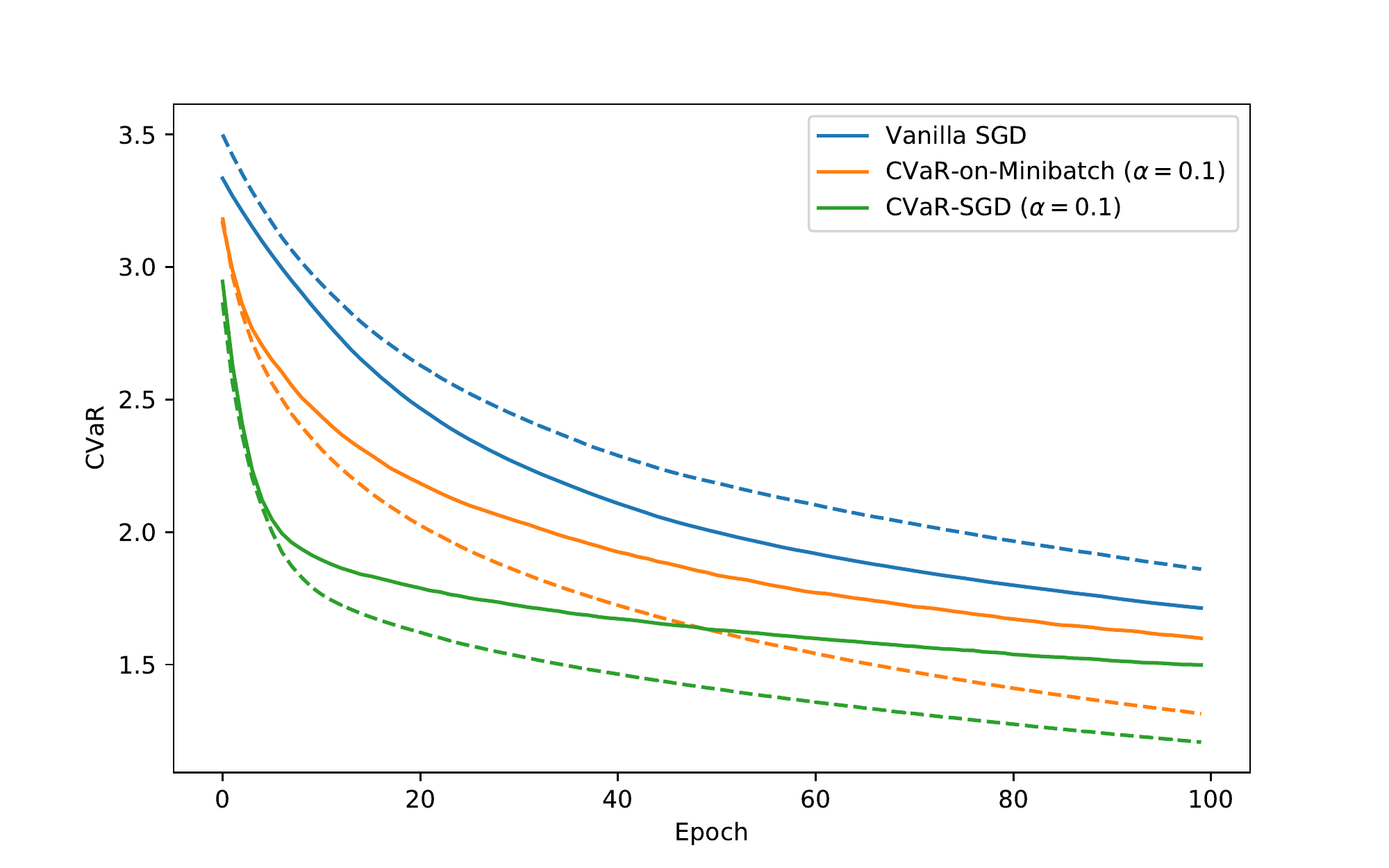}}
  \subfigure[Accuracy]{\includegraphics[width=.32\hsize]{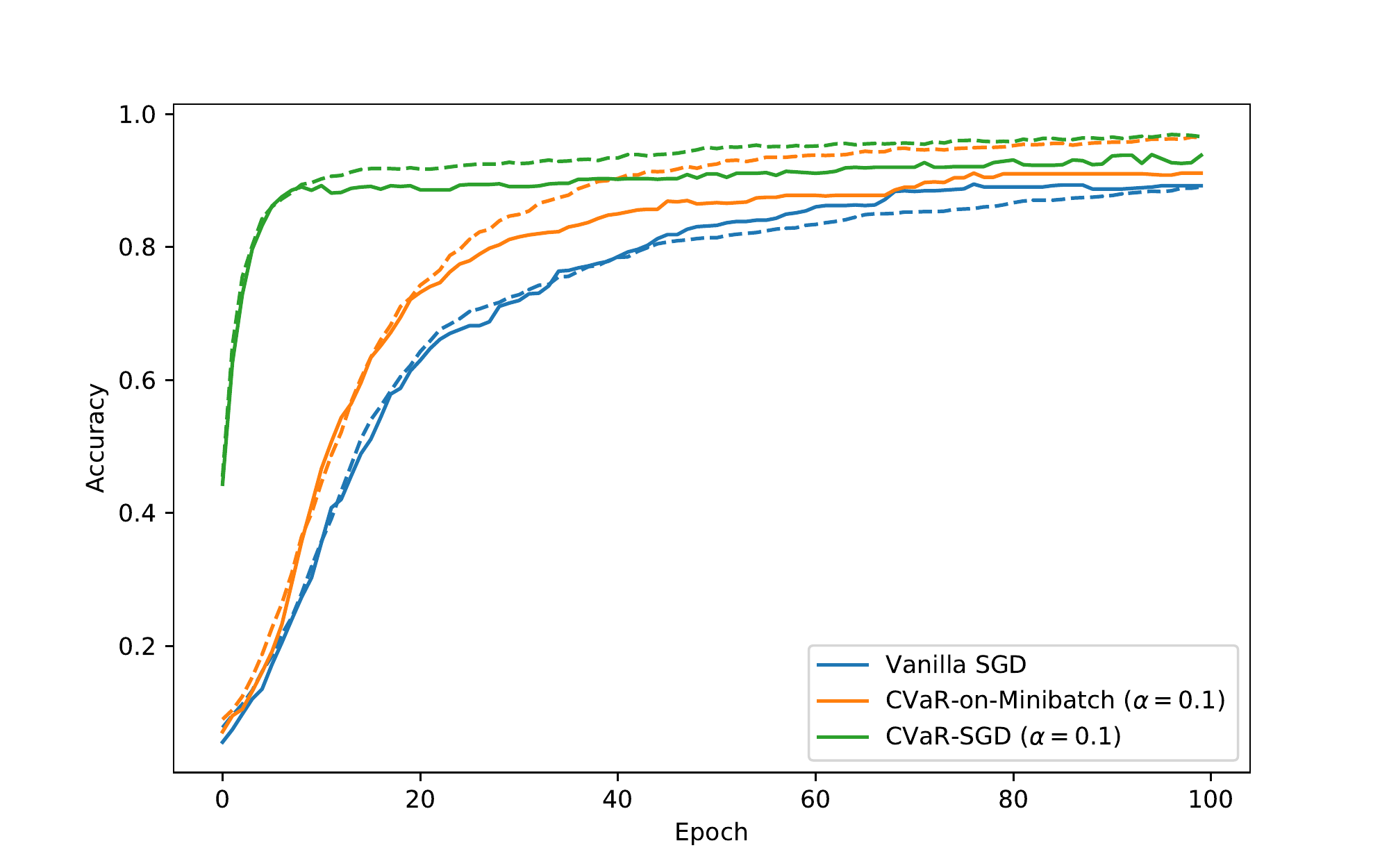}}
  \subfigure[Loss]{\includegraphics[width=.32\hsize]{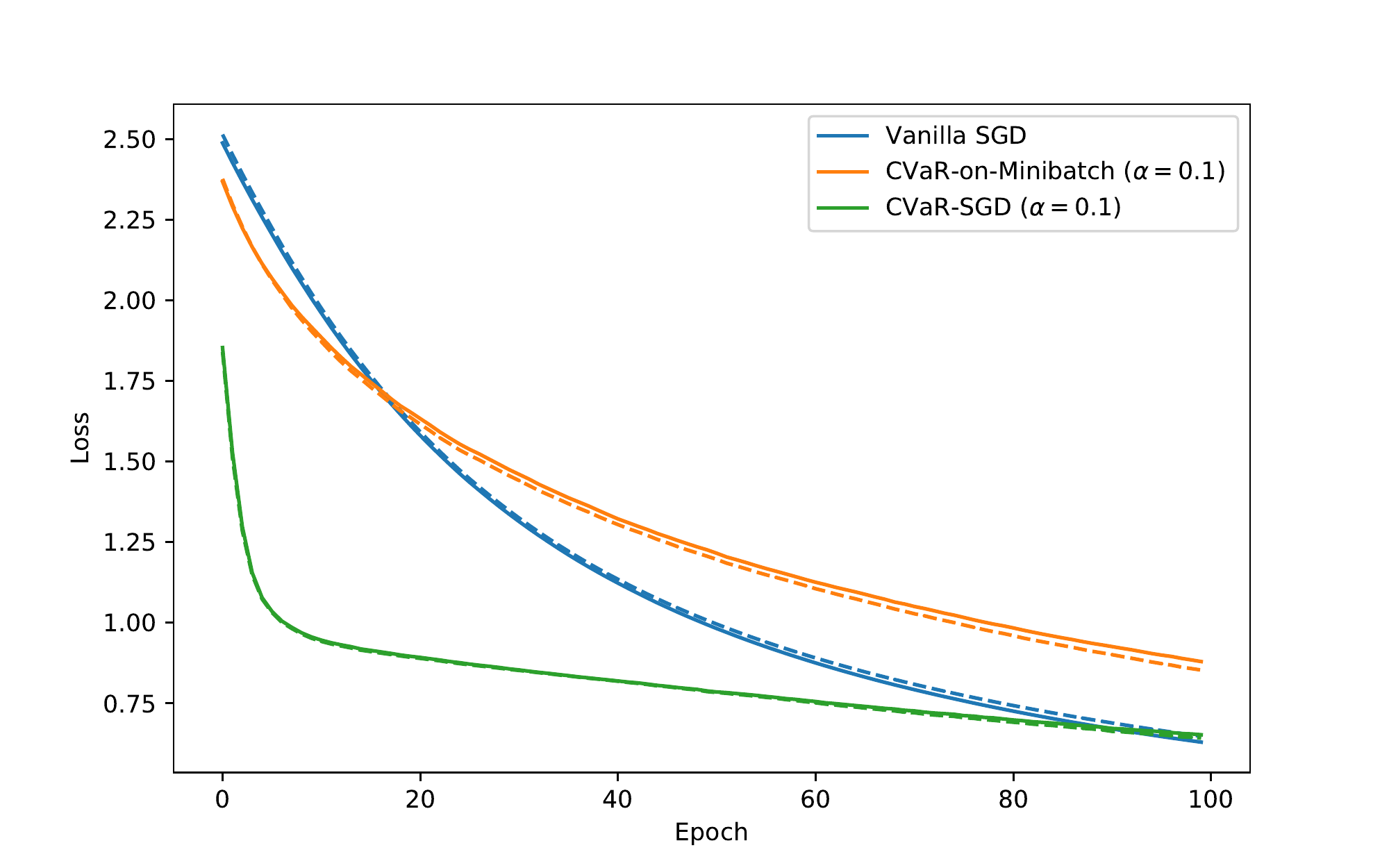}}
  \caption{Transition of the CVaR, accuracy, and average loss of the linear model on the digits dataset. Solid and dashed lines represent the results on the validation and training data, respectively.}\label{fig:digits-linear-curve}
\end{figure*}

\begin{figure*}[t!]
  \centering
  \subfigure[$\alpha=0.05$]{\includegraphics[width=.48\hsize]{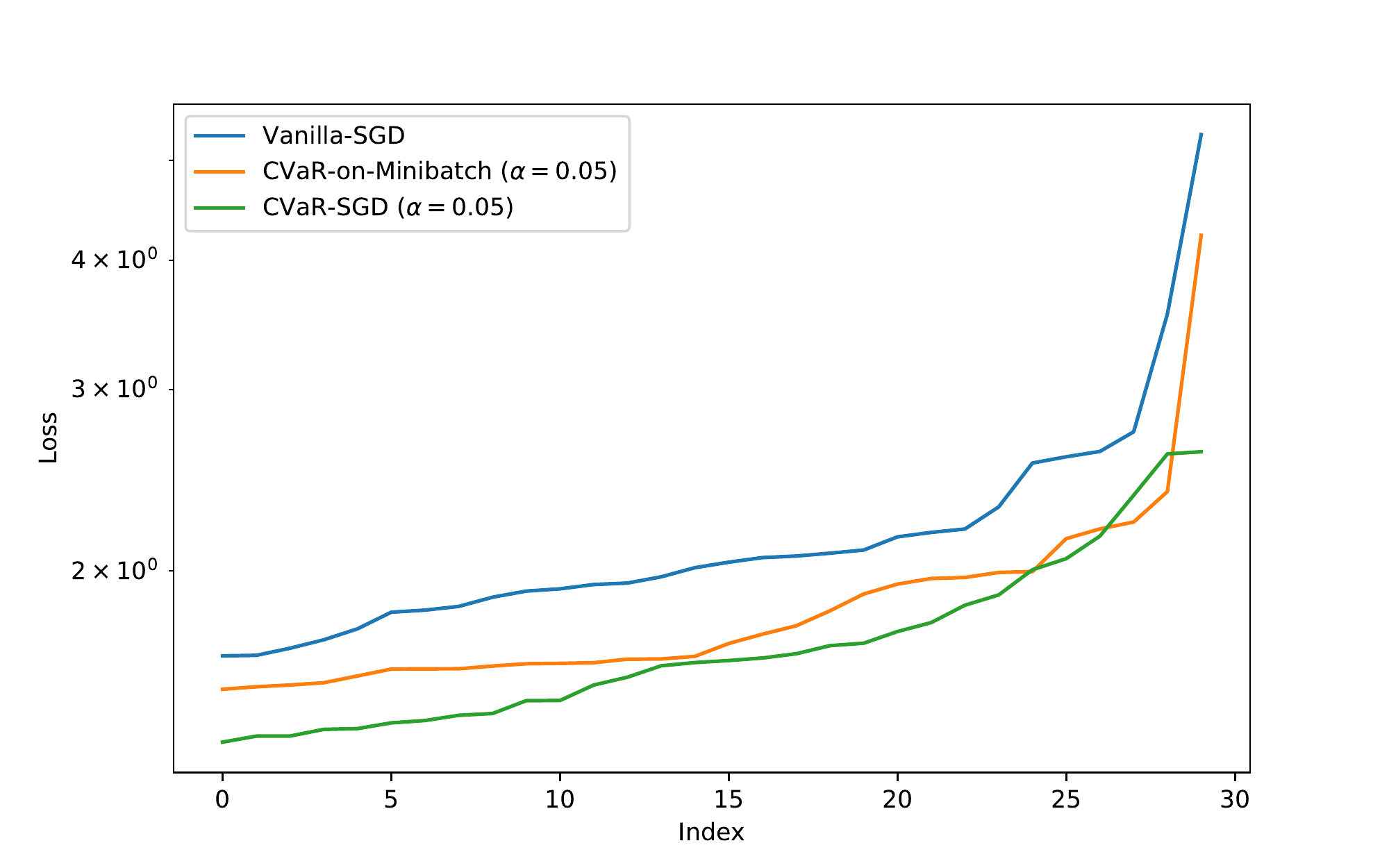}}
  \subfigure[$\alpha=0.1$]{\includegraphics[width=.48\hsize]{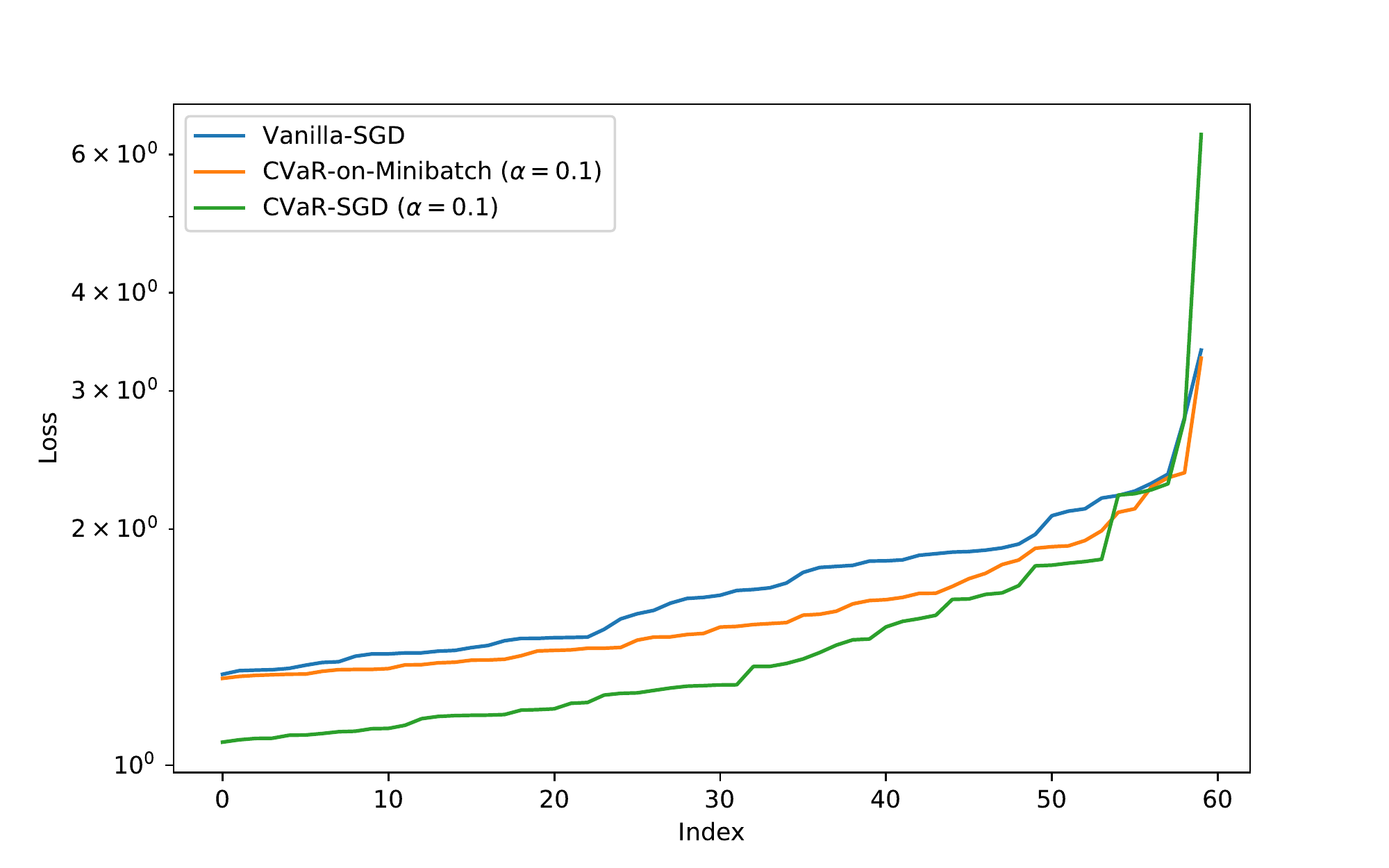}}
  \caption{Top $\alpha$-fraction of loss values of the linear model on the digits dataset sorted in the increasing order.}\label{fig:digits-linear-histogram}
\end{figure*}

\section{Experiments}\label{sec:experiments}

In this section, we demonstrate that CVaRs attained by our algorithm outperform those attained by baseline methods.

\paragraph{Models and Datasets}
We solved (multinomial) logistic regression for classification tasks and linear regression for regression tasks using datasets provided in the scikit-learn library~\cite{scikit-learn} and the MNIST dataset~\cite{Lecun1998}.
For each dataset used from the scikit-learn library, we randomly split data into training validation data such that the former has $2/3$ of the examples.
In Appendix~\ref{sec:further-experimental-results}, we also show experimental results regarding nonlinear neural networks.

\paragraph{Methods}
We compared the following three algorithms, all of which are based on SGD\@.
\begin{itemize}
\itemsep=0pt
\item \textsf{Vanilla-SGD}: The vanilla SGD, where we move along the (negative) gradient of the average of the loss functions in the current minibatch.
\item \textsf{CVaR-on-Minibatch}: A variant of \textsf{Vanilla-SGD}, where we move along the (negative) gradient of the CVaR, i.e., the average of the top $\alpha$-fraction of the loss functions in the current minibatch.
If the size of the minibatch is smaller than $1/\alpha$, then we use the gradient of the example with the maximum loss value.
\item \textsf{CVaR-SGD} (proposed): Algorithm~\ref{alg:minibatch-SGD}.
\end{itemize}
We fix the minibatch size to 512 and applied weight decay to stabilize the learning process.
As a preliminary experiment, for each dataset and method, we train the model with step sizes $0.001$, $0.005$, and $0.01$ and weight decaying factors $0$, $0.0001$, and $0.001$ for 100 epochs.
Among these hyperparameters, we used the best one with the smallest average loss on the validation data for the final plot.

\paragraph{Results}

Figure~\ref{fig:summary} illustrates the experimental results.
As expected, CVaRs obtained by \textsf{CVaR-SGD} are much smaller than those obtained by \textsf{Vanilla-SGD} and they are slightly smaller than those obtained by \textsf{CVaR-on-Minibatch}.

We observe a similar tendency in accuracy for classification tasks, which can be explained as follows:
We can correctly guess the label of an example if the loss for the example is sufficiently small.
Hence, to improve accuracy, it is important to train the model so that we have fewer examples with high losses, which is being attempted by \textsf{CVaR-SGD}.

Although \textsf{CVaR-SGD} does not attempt to minimize the (average) loss, the losses obtained by \textsf{CVaR-SGD} are comparable to those obtained by \textsf{Vanilla-SGD} for many tasks.

Figure~\ref{fig:digits-linear-curve} shows the transition of CVaR$_{0.1}$, accuracy, and average loss over epochs on the digits dataset.
We can observe that \textsf{CVaR-SGD} outperforms other methods for every criterion.

Figure~\ref{fig:digits-linear-histogram} shows the top $\alpha$-fraction of the losses sorted in the increasing order on the validation data of the digits dataset for $\alpha = 0.05,0.1$.
We can observe that \textsf{CVaR-SGD} successfully achieves smaller losses for the top $\alpha$-fraction of examples compared to other methods.

%!TEX root=./main.tex

\section{Conclusions}\label{sec:conclusions}
We proposed a risk-averse statistical learning framework, where the performance of a learning algorithm is evaluated by the conditional value-at-risk (CVaR) of losses.
We devised algorithms based on stochastic gradient descent for this framework and provided a generalization bound on CVaR even when the loss functions are nonconvex.
By conducting numerical experiments on various machine learning tasks, we demonstrated that our algorithms can effectively minimize CVaR compared with other baseline algorithms.

\bibliography{main}

\appendix
%!TEX root=./main.tex

\section{Basic Facts}
\begin{lemma}\label{lem:cocoercive}
    If $f : \bbR^d \to \bbR$ is $\beta$-smooth, then
    \[
        \inprod{\nabla f(x) - \nabla f(y), x - y} \geq \frac{1}{\beta} \norm{x-y}^2.
    \]
\end{lemma}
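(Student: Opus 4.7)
I would prove this via the classical Baillon--Haddad argument. A preliminary remark: the inequality as written (with $\norm{x-y}^2$ on the right) fails for general $\beta$-smooth $f$ (take $f(x)=-\tfrac{\beta}{2}\norm{x}^2$). The name \emph{cocoercive} and its use elsewhere in the paper strongly suggest the intended statement is the co-coercivity of $\nabla f$, namely
\[
\inprod{\nabla f(x)-\nabla f(y),\,x-y} \;\geq\; \tfrac{1}{\beta}\norm{\nabla f(x)-\nabla f(y)}^2,
\]
under the implicit hypothesis that $f$ is also convex. I will plan the proof under that reading.

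The strategy is to reduce to the \emph{descent lemma} applied to an auxiliary function that has a known minimizer. First, I would fix $y$ and define $g(x) = f(x) - \inprod{\nabla f(y),x}$. Because $f$ is convex and $\beta$-smooth, so is $g$, and by construction $\nabla g(y) = 0$, so convexity forces $y$ to be a global minimizer of $g$. Second, I would apply the standard descent lemma (a direct consequence of $\beta$-smoothness via the fundamental theorem of calculus) at the point $x$:
\[
g\!\left(x-\tfrac{1}{\beta}\nabla g(x)\right) \;\leq\; g(x)-\tfrac{1}{2\beta}\norm{\nabla g(x)}^2.
\]
Combined with $g(y) \leq g\!\left(x-\tfrac{1}{\beta}\nabla g(x)\right)$, unwinding the definition of $g$ gives
\[
f(x)-f(y)-\inprod{\nabla f(y),\,x-y} \;\geq\; \tfrac{1}{2\beta}\norm{\nabla f(x)-\nabla f(y)}^2.
\]

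Third, I would run the same argument with the roles of $x$ and $y$ exchanged, using the auxiliary function $h(u)=f(u)-\inprod{\nabla f(x),u}$ minimized at $x$, to obtain
\[
f(y)-f(x)-\inprod{\nabla f(x),\,y-x} \;\geq\; \tfrac{1}{2\beta}\norm{\nabla f(x)-\nabla f(y)}^2.
\]
Finally, summing these two inequalities makes the function values $f(x),f(y)$ cancel and leaves precisely
\[
\inprod{\nabla f(x)-\nabla f(y),\,x-y} \;\geq\; \tfrac{1}{\beta}\norm{\nabla f(x)-\nabla f(y)}^2,
\]
which is the co-coercivity bound.

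The one potentially delicate step is the second one: identifying $y$ as a \emph{global} minimizer of $g$ (and similarly $x$ for $h$). This is where convexity is essential; without it, $\nabla g(y)=0$ only makes $y$ a critical point, and the descent-lemma chain $g(y)\leq g(x)-\tfrac{1}{2\beta}\norm{\nabla g(x)}^2$ is no longer justified. Every other ingredient (the descent lemma and the final algebraic cancellation) is routine, so the proof really hinges on using convexity to promote the critical-point condition to global optimality.
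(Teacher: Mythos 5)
Your proof is correct, and your diagnosis of the statement is the most valuable part of it: the inequality as printed is indeed false (your counterexample $f(x)=-\tfrac{\beta}{2}\norm{x}^2$ works, and even $f\equiv 0$ refutes it under convexity), and the intended claim must be the Baillon--Haddad co-coercivity bound
\[
\inprod{\nabla f(x)-\nabla f(y),\,x-y}\;\geq\;\tfrac{1}{\beta}\norm{\nabla f(x)-\nabla f(y)}^2
\]
for \emph{convex} $\beta$-smooth $f$. This reading is confirmed by the only place the lemma is invoked, in the stability analysis of minibatch SGD, where it is applied to the convex smoothed auxiliary function and used precisely in the form $\inprod{g_t-g_t',\,x_t-x_t'}\geq \tfrac{1}{\gamma}\norm{g_t-g_t'}^2$. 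The paper states the lemma in its ``Basic Facts'' appendix with no proof at all, so there is nothing to compare your argument against; for the record, your route (linearize at $y$ to form $g(u)=f(u)-\inprod{\nabla f(y),u}$, use convexity to identify $y$ as a global minimizer of $g$, apply the descent lemma, symmetrize, and add) is the standard textbook proof, and you correctly flag the one step where convexity is indispensable. The only caveat is that your corrected lemma should then be cited with the convexity hypothesis made explicit, which is harmless here since the smoothed auxiliary functions in that section are convex.
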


\section{Omitted Proofs}
\subsection{Proof of Lemma~\ref{lem:Lips_const}}
It suffices to show that $\norm{g} \leq \Galpha$ for all subgradients $g \in \partial_x f(x; z)$.
Let $x = (w,\tau)$. We need to consider three cases.
\paragraph{Case 1: $\ell(w;z) > \tau$} In this case, we have $\partial_x f(x; z) = \{[\nabla_w \ell(w; z)/ \alpha, 1-1/\alpha]^\top \}$, and hence, $\norm{g}^2 \leq  \frac{G^2}{\alpha^2} + (1-\frac{1}{\alpha})^2 = \frac{G^2+(\alpha-1)^2}{\alpha^2}$ for all $g \in \partial_x f(x; z)$.
\paragraph{Case 2: $\ell(w;z) = \tau$} In this case, we have
\[
\partial_x f(x; z) =  \{[t \nabla_w \ell(w; z)/\alpha, 1-t/\alpha]^\top : t \in [0,1] \}.
\]
Thus for all $g \in \partial_x (x; z)$, we have $\norm{g}^2 \leq \max_{t \in [0,1]} [ t^2\frac{G^2}{\alpha^2} + (1-\frac{t}{\alpha})^2 ] = \max\left\{\frac{G^2}{\alpha^2},\frac{(1-\alpha)^2}{\alpha^2} \right\}$.
\paragraph{Case 3: $\ell(w;z) < \tau$} In this case, we have $\partial_x (x; z) =  \{[\mathbf{0}, 1]^\top \}$, and hence, $\norm{g} = 1$ for all $g \in \partial_x (x; z)$.

This completes the proof.

\subsection{Proof of Lemma~\ref{lem:smoothing}}
For the simplicity of the exposition, we drop $z$ from the notations.
The first claim is immediate from the previous lemma.
For the second and third claims, note that
\[
\nabla \tilde{f}_\eps(x) =
\begin{bmatrix}
  \frac{1}{\alpha}\dot{\rho}_\eps(\ell(w) - \tau)\nabla \ell(w) \\
  -\frac{1}{\alpha}\dot{\rho}_\eps(\ell(w) - \tau) + 1
\end{bmatrix},
\]
where $\dot{\rho}_\eps$ denotes the derivative of $\rho_\eps$.
Since $\dot{\rho}_\eps \in [0, 1]$, we have $\norm{\nabla \tilde{f}_\eps}^2 \leq \max_{t\in[0,1]} [\frac{t^2G^2}{\alpha^2} + (1-\frac{t}{\alpha})^2] \leq G_\alpha^2$.

Finally, for $x=(w,\tau)$ and $x'=(w',\tau')$,
\begin{align*}
  &\norm{\dot\rho_\eps(\ell(w) - \tau)\nabla \ell(w) - \dot\rho_\eps(\ell(w') - \tau')\nabla \ell(w')} \\
  &\leq \abs{\dot\rho_\eps(\ell(w) - \tau)} \cdot \norm{\nabla \ell(w) - \nabla \ell(w')}
  + \norm{\nabla \ell(w')} \cdot \abs{\dot\rho_\eps(\ell(w) - \tau) - \dot\rho_\eps(\ell(w') - \tau')} \\
  &\leq \beta \norm{w} + G \cdot 2/\eps \abs{(\ell(w) - \tau) - (\ell(w') - \tau')} \\
  & \leq  \beta\norm{w-w'} + 2G^2/\eps \norm{x-x'} \\
  &= \left(\beta + 2G^2/\eps \right) \norm{x-x'}.
\end{align*}
We can bound the last coordinate of $\nabla \tilde{f}_\eps(x)$ similarly.

\subsection{Proof of Lemma~\ref{lem:smoothing-gap-CvaR}}
\begin{proof}
    By Lemma~\ref{lem:smoothing}, we have
    \[
        \E_z \tilde{f}_\eps(w, \tau ;z) \leq \E_z f(w,\tau ; z) + \eps
    \]
    for all $\tau$.
    Let us take $\tau^* \in \argmin_{\tau\in [0,1]} f(w,\tau; z)$ and we have
    \[
        \min_\tau \E_z \tilde{f}_\eps(w, \tau ;z) \leq \E_z \tilde{f}_\eps(w, \tau^* ;z) \leq \E_z f(w,\tau^* ; z) + \eps,
    \]
    which shows that $\CVaRtilde_{\caD,\eps}(w)\leq \CVaR_\caD(w)  + \eps$.
    The other direction is trivial.
\end{proof}

\subsection{Proof of Lemma~\ref{lem:opterr}}
We use the following analysis of SGD from \citet{Hardt2016}.

\begin{lemma}[\citet{Hardt2016}]
    Let $f(x;z)$ be a $L$-Lipschitz and $\gamma$-smooth convex function with the range bounded in $[0,1]$ for all $z$.
    Let $F$ be a convex set with diameter $P$.
    Let $S = \{z_1, \dots, z_n\}$ be samples and define
    $R(x) = \E_{z\sim \caD}[f(x;z)]$ and $R_S(x) = \frac{1}{T}\sum_{i=1}^n f(x;z_i)$.
    Let $x_t$ be the SGD iterate with $n$ samples and a learning rate $\eta$, and $x := \frac{1}{T}\sum_{t=1}^T x_t$.
    If $\eta \leq 2/\gamma$, then
    \begin{align*}
        \E[R(x) - \min_{x^* \in F}R_S(x^*) ] \leq \frac{1}{2}\left[ \frac{P^2}{\eta T} + \eta L^2 \left( 1 + \frac{2T}{n} \right) \right].
    \end{align*}
\end{lemma}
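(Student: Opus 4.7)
The plan is to decompose $\E[R(x) - \min_{x^* \in F} R_S(x^*)]$ into a generalization gap plus an empirical optimization error, and bound each by a standard tool: the uniform algorithmic stability of SGD for the former, and the regret-to-optimization conversion of projected gradient descent for the latter. Concretely, the starting point is
\[
\E[R(x) - \min_{x^* \in F} R_S(x^*)] = \E[R(x) - R_S(x)] + \E[R_S(x) - \min_{x^*} R_S(x^*)],
\]
and I would treat the two summands separately.

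For the generalization gap $\E[R(x) - R_S(x)]$, the plan is to invoke uniform stability in the sense of \citet{Hardt2016}. Given two samples $S$ and $S'$ differing in exactly one example, couple two SGD runs that share the same random index sequence. Because $f(\cdot\,;z)$ is convex and $\gamma$-smooth with $\eta \leq 2/\gamma$, the co-coercivity property of Lemma~\ref{lem:cocoercive} implies that each step $x \mapsto x - \eta \nabla f(x;z)$ is a nonexpansion in $x$. Consequently the coupled trajectories drift apart only at the single step where different examples are drawn, and at that step $L$-Lipschitzness bounds the drift by $2\eta L$. Summing over $T$ iterations and averaging over which index carries the differing example yields expected iterate divergence of order $\eta L T/n$, hence, by $L$-Lipschitzness and convexity of the averaged iterate, uniform stability of order $\eta L^2 T/n$. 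The standard fact that uniform stability controls the in-expectation generalization gap then gives $\E[R(x) - R_S(x)] \leq \eta L^2 T/n$ (up to the constant matching the target).

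For the empirical optimization error, the plan is to view SGD as online projected gradient descent on the random sequence $f(\cdot\,;z_1),\dots,f(\cdot\,;z_T)$ with $z_t$ drawn uniformly from $S$. Standard OCO analysis over a convex set $F$ of diameter $P$ yields
\[
\sum_{t=1}^T \bigl(f(x_t;z_t) - f(x^*;z_t)\bigr) \leq \frac{P^2}{2\eta} + \frac{\eta L^2 T}{2}
\]
for any fixed $x^* \in F$. Dividing by $T$, taking expectation over $z_t$ conditioned on past iterates, and using convexity of each $f(\cdot\,;z)$ together with Jensen's inequality applied to $x = \tfrac{1}{T}\sum_t x_t$ converts this into $\E[R_S(x)] - R_S(x^*) \leq \tfrac{P^2}{2\eta T} + \tfrac{\eta L^2}{2}$. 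Choosing $x^* \in \argmin_{x\in F} R_S(x)$ and adding the stability bound produces the claimed inequality.

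The main obstacle, and the step that makes essential use of the hypothesis $\eta \leq 2/\gamma$, is establishing the nonexpansion property of the convex smooth gradient step that drives the stability argument; once Lemma~\ref{lem:cocoercive} is in hand this is a short calculation, but smoothness cannot be dropped, since without it one only obtains a weaker stability bound of order $\eta L \sqrt{T/n}$, which would not yield the stated rate. The optimization-error portion is routine OCO and presents no real difficulty.
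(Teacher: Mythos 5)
Your proposal is correct and follows essentially the same route as the source: the paper states this lemma as a black-box import from \citet{Hardt2016}, and the standard proof there --- which the paper itself replays almost verbatim for its minibatch variant in the appendix --- is exactly your decomposition into a uniform-stability generalization term of order $\eta L^2 T/n$ (via non-expansiveness of the smooth convex gradient step under $\eta \leq 2/\gamma$, plus the fact that projection preserves non-expansiveness) and a standard projected-SGD optimization term $\tfrac{P^2}{2\eta T} + \tfrac{\eta L^2}{2}$, which sum to the claimed bound. The only cosmetic caveat is that the single-step drift of $2\eta L$ gives stability $2\eta L^2 T/n$ for the last iterate, and the factor of $2$ is removed by averaging the iterates, as you implicitly use and as the paper notes via \citet[Theorem~4.7]{Hardt2016}.
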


Applying this lemma to our setting, we obtain
\begin{align*}
    \E[R(x) - \min_{x^* \in F}R_S(x^*) ]
    &\leq \frac{1}{2}\left[ \frac{D^2 + 1}{\eta T} + \eta G_\alpha^2 \left(1 + \frac{2T}{n} \right) \right].
\end{align*}
Note that in our setting $L = G_\alpha$, $\gamma = \betaalpeps$, and $P = \sqrt{D^2+1}$.

\subsection{Proof of Lemma~\ref{lem:paramtuning}}
Let us set
\begin{align*}
    \eta &= \frac{\sqrt{D^2+1} \sqrt{n}}{G_\alpha \sqrt{T(n+2T)}} = \frac{\sqrt{D^2+1}} {G_\alpha \sqrt{n}} \cdot \frac{1}{\sqrt{1+2c}},  \\
    \eps &= 2G_\alpha^2 \eta = \frac{2G_\alpha \sqrt{D^2+1}}{\sqrt{n}} \cdot \frac{1}{\sqrt{1+2c}}.
\end{align*}
We must check that this choice satisfies $\eta \leq \frac{2\alpha}{\beta + 2G^2/\eps}$.
To achieve this, first note that
\begin{align*}
        \frac{2\alpha}{\beta + 2G^2/\eps} \geq \frac{\alpha}{\max\{\beta,2G^2/\eps \}} = \min\left\{\frac{\alpha}{\beta}, \frac{\alpha\eps}{2G^2} \right\}.
\end{align*}
Hence it suffices to check that $\eta \leq \alpha/\beta$ and $\eta \leq \alpha\eps/2G^2$.
The former condition is satisfied by our assumption \eqref{eq:cond-on-c} on $c$.
The latter condition follows from
\begin{align*}
    \eta = \frac{\eps}{2G_\alpha^2} \leq \frac{\alpha^2\eps}{2G^2} \leq \frac{\alpha \eps}{2G^2},
\end{align*}
because
\[
    G_\alpha = \Galpha \geq \frac{G}{\alpha}.
\]
Now by the choice of $\eta$ and $\eps$, we have
\begin{align*}
\frac{1}{2}\left[ \frac{D^2 + 1}{\eta T} + \eta G_\alpha^2 \left(1 + \frac{2T}{n} \right) \right] + \eps
&\leq \frac{G_\alpha \sqrt{D^2+1}}{\sqrt{n}} \left(\sqrt{\frac{1+2c}{c}} + \frac{2}{\sqrt{1+2c}} \right).
\end{align*}

\subsection{Proof of Theorem~\ref{thm:minibatch-SGD}}
For the proof, we use the concept called \emph{uniform stability}.

\begin{definition}[uniform stability]
    Let $f(x;z)$ be a real-valued function parametrized by $z$.
    A randomized algorithm $A$ is \emph{$\epsilon$-uniformly stable} with respect to $f$ if for any two sequences of examples $S$ and $S'$ that differ in at most one example, we have
    \[
        \sup_{z}\E_A[f(A(S);z)-f(A(S');z)] \leq \eps.
    \]
\end{definition}

Let us denote $R(x) = \E_{z\sim \caD}[f(x;z)]$ and $R_S(x) = \E_{z \sim S}[f(x;z)]$.

\begin{lemma}[{\citet[Theorem~2.2]{Hardt2016}}]\label{lem:stability-generalization}
    An $\eps$-uniformly stable algorithm $A$ satisfies
    \[
        \abs*{
        \E_{S,A}\left[ R_S(A(S)) - R(A(S)) \right]
        }\leq \eps.
    \]
\end{lemma}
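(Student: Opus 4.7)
The plan is to prove the lemma by a symmetrization/renaming argument that rewrites the expected generalization gap as an average of one-coordinate stability deviations. Introduce an independent \emph{ghost} sample $S' = (z_1',\dots,z_n') \sim \caD^n$ that is also independent of the internal randomness of $A$, and for each $i\in[n]$ let $S^{(i)}$ be the sequence obtained from $S$ by replacing $z_i$ with $z_i'$. The point of this construction is that $S$ and $S^{(i)}$ differ in at most one example, so the uniform-stability hypothesis applies directly to the pair $(S, S^{(i)})$.

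Next, I would rewrite the population risk using the ghost sample. For each fixed $i$, since $z_i'$ has distribution $\caD$ and is independent of $(S,A)$, we have $\E_{S,A}[R(A(S))] = \E_{S,S',A}[f(A(S); z_i')]$. Averaging this identity over $i\in[n]$ is harmless. The key step is the renaming symmetry: because $z_1,\dots,z_n,z_1',\dots,z_n'$ are i.i.d.\ and independent of $A$'s randomness, swapping $z_i$ with $z_i'$ leaves the joint law of everything invariant, and under that swap $S$ becomes $S^{(i)}$ while $z_i'$ becomes $z_i$, giving
\[
\E_{S,S',A}[f(A(S); z_i')] \;=\; \E_{S,S',A}[f(A(S^{(i)}); z_i)].
\]
Combining with $\E_{S,A}[R_S(A(S))] = \tfrac{1}{n}\sum_i \E_{S,A}[f(A(S); z_i)]$ yields
\[
\E_{S,A}[R(A(S)) - R_S(A(S))] \;=\; \frac{1}{n}\sum_{i=1}^n \E_{S,S',A}\bigl[f(A(S^{(i)}); z_i) - f(A(S); z_i)\bigr].
\]

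Finally, I would apply the uniform-stability assumption inside each summand. Conditioning on $(S,S')$ and viewing $z_i$ as the fixed test point, the definition of $\eps$-uniform stability gives $|\E_A[f(A(S^{(i)}); z_i) - f(A(S); z_i)]| \leq \eps$, and taking outer expectations and absolute values through the average over $i$ gives the claimed bound.

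The main obstacle here is not analytic but careful bookkeeping: one must ensure that the algorithmic randomness is independent of the resampling, that the renaming identity is asserted at the level of joint distributions rather than pointwise (so the equality of expectations is legitimate), and that the supremum over the test point in the definition of uniform stability is strong enough to cover the case where that test point is itself the resampled coordinate $z_i$. These verifications prevent any circularity between the stability parameter and the very coordinate being swapped, which is the only subtle aspect of an otherwise mechanical argument.
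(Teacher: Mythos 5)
Your argument is correct: the paper itself gives no proof of this lemma and simply imports it by citation from \citet[Theorem~2.2]{Hardt2016}, and your ghost-sample symmetrization is precisely the argument used there, including the careful point that the renaming identity holds at the level of joint laws and that the supremum over test points in the stability definition covers the resampled coordinate. The only bookkeeping detail worth noting is that the paper's definition of uniform stability bounds the signed quantity $\sup_z \E_A[f(A(S);z)-f(A(S');z)]$, so the two-sided bound $\abs{\E_A[\cdot]}\leq\eps$ you invoke follows by exchanging the roles of $S$ and $S^{(i)}$, which is legitimate since the definition quantifies over all pairs differing in one example.
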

First, we analyze the stability of minibatch SGD.
\begin{lemma}
    Assume that $f(\cdot;z)$ is $L$-Lipschitz and $\gamma$-smooth for all $z$.
    Let us consider the minibatch SGD iteration $x_{t+1} = \proj_K(x_t - \eta g_t)$ over samples $S=\{z_1, \dots, z_n\}$, where $g_t$ is the averaged subgradient estimate over minibatch of size $b$.
    Define $A(S) = \frac{1}{T}\sum_{t=1}^T x_t$.
    If $\eta \leq 2/\gamma$, then $A$ is $\eps$-uniformly stable where
    \[
        \eps \leq \eta\frac{L^2T}{n}.
    \]
\end{lemma}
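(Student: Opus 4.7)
The plan is to adapt the Hardt--Recht--Singer uniform stability proof (invoked as Lemma~\ref{lem:stability-generalization} above) to the minibatch setting via a synchronous coupling. Let $S$ and $S'$ be two sample sequences differing only at a single index $j$, and run minibatch SGD on both starting from the same initialization $x_1 = x_1'$ and using identical random minibatch index sequences $\{(i_1^{(t)},\dots,i_b^{(t)})\}_t$. Denote the coupled iterates by $x_t, x_t'$ and write $K_t$ for the number of slots in the $t$-th minibatch equal to $j$. The aim is to control $\E[\norm{x_t - x_t'}]$ and then convert to a stability bound via $L$-Lipschitzness of $f(\cdot;z)$.

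The key step is the per-iteration recursion, driven by the identity
\[
    x_t - \eta g_t = \frac{1}{b}\sum_{k=1}^b \bigl(x_t - \eta\, \nabla f(x_t; z_{i_k})\bigr),
\]
which writes the minibatch update as a convex combination of \emph{full-step} (step size $\eta$, not $\eta/b$) individual gradient updates. By non-expansiveness of $\proj_K$ and convexity of $\norm{\cdot}$, it suffices to bound each summand separately. For slots with $i_k \neq j$ the two runs share the sample, so the hypothesis $\eta \leq 2/\gamma$ applied to the $\gamma$-smooth convex function $f(\cdot; z_{i_k})$ makes the individual gradient step non-expansive. For the $K_t$ slots with $i_k = j$ the two samples differ; adding and subtracting $\eta \nabla f(x_t'; z_j)$ and using $L$-Lipschitzness of each summand produces at most $2\eta L$ extra slack per such slot. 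Combining the two cases yields
\[
    \norm{x_{t+1} - x_{t+1}'} \leq \norm{x_t - x_t'} + \frac{2\eta L K_t}{b}.
\]
Taking expectations with $\E[K_t] = b/n$ (each minibatch slot samples index $j$ with probability $1/n$) and unrolling from $x_1 = x_1'$ gives $\E[\norm{x_t - x_t'}] \leq 2\eta L(t-1)/n$.

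Finally, because $A(S) = \frac{1}{T}\sum_t x_t$, the triangle inequality together with $\sum_{t=1}^T (t-1) = T(T-1)/2$ gives $\E[\norm{A(S) - A(S')}] \leq \eta L(T-1)/n \leq \eta L T/n$, and $L$-Lipschitzness of $f(\cdot; z)$ then yields $\sup_z \E_A[f(A(S); z) - f(A(S'); z)] \leq \eta L^2 T/n$, as claimed. The only delicate point is the averaging identity in the minibatch decomposition: one might naively assume that each individual gradient contributes a step of size only $\eta/b$, which would be too small to invoke the non-expansion property, but the convex-combination viewpoint is precisely what licenses the per-slot application of the $\eta \leq 2/\gamma$ non-expansion; once that is in hand, the rest of the argument is a straightforward coupling-and-unrolling computation that is insensitive to $b$ up to the slack term $2\eta L K_t/b$ whose expectation is $b$-free.
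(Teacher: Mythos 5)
Your proof is correct, and while the overall skeleton (synchronous coupling of the two runs, a per-iteration growth recursion for $\delta_t = \norm{x_t - x_t'}$, and a final conversion via $L$-Lipschitzness) is the same as the paper's, your key decomposition is genuinely different and, I would say, cleaner. The paper splits on the all-or-nothing event $B_t = B_t'$, uses co-coercivity to get non-expansion when the whole minibatch coincides, and in the complementary event (probability $1-(1-1/n)^b \leq b/n$) adds a crude slack bounded via the gradient norms; this does not cleanly account for the possibility that several slots draw the replaced index, and the writeup wavers between slack $2\eta L$ and $2\eta L/b$. Your identity $x_t - \eta g_t = \frac{1}{b}\sum_k (x_t - \eta\nabla f(x_t; z_{i_k}))$, which exhibits the minibatch step as a convex combination of \emph{full-step} updates, lets you apply the $\eta \leq 2/\gamma$ non-expansion slot by slot and charge exactly $2\eta L/b$ per bad slot, giving the exact recursion $\E[\delta_{t+1}] \leq \E[\delta_t] + 2\eta L K_t/b$ with $\E[K_t] = b/n$ regardless of whether slots are sampled with or without replacement. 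You also derive the factor-of-two saving from averaging explicitly via $\sum_{t=1}^T (t-1) = T(T-1)/2$, where the paper only cites it. One caveat applies equally to both arguments: the per-slot (resp.\ per-batch) non-expansion of $x \mapsto x - \eta\nabla f(x;z)$ requires \emph{convexity} of $f(\cdot;z)$ in addition to $\gamma$-smoothness, which the lemma statement omits but which holds for the smoothed auxiliary function to which the lemma is applied; you correctly flag that you are using convexity, so this is a gap in the statement rather than in your proof.
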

\begin{proof}
    The proof is a simple modification of \citet{Hardt2016}.
    Without loss of generality, we can ignore the projection in SGD because the projection preserves uniform stability~\citep[Lemma~4.6]{Hardt2016}.
    Let $S$ and $S'$ be datasets differing in one element, and $x_t$ and $x'_t$ be iterates of minibatch SGD with $S$ and $S'$, respectively.
    Define $\delta_t = \norm{w_t - w_{t-1}}$ for each $t$.
    For each $t$, denote by $B_t$ and $B_t'$ the minibatches selected by $A$ with input $S$ and $S'$, respectively.
    Note that $\Pr(B_t = B_t') = (1-1/n)^b$.

    \paragraph{Case 1: $B_t = B_t'$} In this case, we have
    \begin{align*}
        \norm{x_{t+1} - x_{t+1}'}^2 &= \norm{x_t - x_t'}^2 - 2\eta\inprod{g_t - g_t', w_t - w_t'} + \eta^2\norm{g_t-g_t'}^2 \\
        &\leq \norm{x_t - x_t'}^2 -\left (\frac{2}{\gamma} - \eta\right)\eta\norm{g_t-g_t'}^2 \tag{by Lemma~\ref{lem:cocoercive}} \\
        &\leq \norm{x_t - x_t'}^2. \tag{since $\eta \leq 2/\gamma$}
    \end{align*}
    Hence $\delta_{t+1}\leq \delta_t$.

    \paragraph{Case 2: $B_t \neq B_t'$} In this case, we have
    \begin{align*}
        \norm{x_{t+1} - x_{t+1}'} &\leq \norm{x_{t+1} - x_t} + \norm{x'_{t+1} - x'_t} + \norm{x_t - x_t'}
        = \eta (\norm{g_t} + \norm{g_t'}) + \delta_t
        \leq 2\eta \frac{L}{b} + \delta_t.
    \end{align*}
    Therefore, we have
    \begin{align*}
        \E_A[\delta_{t+1} \mid \delta_t] &\leq \left(1-\frac{1}{n}\right)^b \delta_t + \left[ 1 -  \left(1-\frac{1}{n}\right)^b \right] \cdot (2\eta L + \delta_t) \\
        &\leq \delta_t + \frac{2\eta L}{b} \left[ 1 -  \left(1-\frac{1}{n}\right)^b \right] \\
        &\leq \delta_t + \frac{2\eta L}{n},
    \end{align*}
    where in the last inequality we used an elementary inequality $1-bx \leq (1-x)^b$ for $x \in [0,1]$.
    This yields $\E_A[\delta_T] \leq \frac{2\eta TL}{n}$.
    Since $f$ is $L$-Lipchitz,
    $\E_A[\abs{f(w_T;z)-f(w'_T;z)}] \leq L \E_A[\delta_t] \leq \frac{2\eta TL^2}{n}$.

    Now if we consider averaged iterates, one can remove a factor of 2 (see~\citet[Theorem~4.7]{Hardt2016}).
\end{proof}
\begin{lemma}[see e.g., {\citet[Theorem~3.4]{Hazan2016OCO}}]
    Assume that $f_i(x; z_i)$ is $L$-Lipschitz for $i=1,\dots,n$ and $\diam(F)\leq P$.
    Suppose that we run SGD on $R_S(x) = \frac{1}{n}\sum_{i=1}^n f_i(x; z_i)$ with a constant step size $\eta$.
    Then the averaged iterates $\bar{x}_T$ satisfies
    \[
        \E[R_S(\bar{x}_T) - \min_{x^*\in F}R_S(x^*)] \leq \frac{\eta L^2}{2} + \frac{D^2}{\eta T}.
    \]
\end{lemma}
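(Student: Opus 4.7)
The plan is to follow the standard online/stochastic gradient descent regret analysis specialized to the empirical objective $R_S$, noting that each $f_i(\cdot\ ; z_i)$ is implicitly convex (we are in the convex section) and that the sample index at each step is drawn uniformly from $[n]$, so the stochastic subgradient is unbiased for a subgradient of $R_S$.

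First, I would fix any $x^* \in \argmin_{x \in F} R_S(x)$ and track the squared distance $\norm{x_t - x^*}^2$. Since projection onto the convex set $F$ is nonexpansive and $x^* \in F$, the update $x_{t+1} = \proj_F(x_t - \eta g_t)$ (where $g_t \in \partial f_{i_t}(x_t; z_{i_t})$ for a uniformly chosen index $i_t$) gives
\[
\norm{x_{t+1} - x^*}^2 \leq \norm{x_t - x^*}^2 - 2\eta \inprod{g_t, x_t - x^*} + \eta^2 \norm{g_t}^2.
\]

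Second, I would take conditional expectations step by step. Uniform sampling of $i_t$ gives $\E[g_t \mid x_t] \in \partial R_S(x_t)$ by linearity of the subdifferential for finite sums of convex functions, so convexity of $R_S$ yields $\E[\inprod{g_t, x_t - x^*} \mid x_t] \geq R_S(x_t) - R_S(x^*)$. The Lipschitz assumption ensures $\norm{g_t} \leq L$ almost surely. Rearranging the inequality above and telescoping over $t = 1, \dots, T$ collapses the distance terms down to the initial one, which is at most $\diam(F)^2 \leq D^2$, producing
\[
\sum_{t=1}^T \E[R_S(x_t) - R_S(x^*)] \leq \frac{D^2}{2\eta} + \frac{\eta L^2 T}{2}.
\]

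Third, I would convert a bound on iterates into a bound on the averaged iterate: by Jensen's inequality applied to the convex function $R_S$, $R_S(\bar{x}_T) \leq \frac{1}{T}\sum_{t=1}^T R_S(x_t)$, so dividing by $T$ delivers $\E[R_S(\bar{x}_T) - R_S(x^*)] \leq \frac{D^2}{2\eta T} + \frac{\eta L^2}{2}$, which is at least as strong as the claimed $\frac{D^2}{\eta T} + \frac{\eta L^2}{2}$.

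There is no real obstacle here; this is a textbook SGD analysis. The only things worth flagging are (i) the minor notational mismatch between the hypothesis $\diam(F) \leq P$ and the statement $D^2$ in the bound, which I would treat as $P = D$, and (ii) implicit convexity of each $f_i$, which is needed for both the subdifferential unbiasedness step and the final Jensen's inequality step, and which holds throughout Section~\ref{sec:convex}.
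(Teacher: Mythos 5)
Your proof is correct and is exactly the standard stochastic gradient descent analysis that the paper defers to its citation of \citet[Theorem~3.4]{Hazan2016OCO}: nonexpansiveness of the projection, unbiasedness of the sampled subgradient for $\partial R_S$, telescoping the squared distances, and Jensen's inequality on the averaged iterate. Your version even recovers the sharper constant $\frac{D^2}{2\eta T}$, which implies the stated bound, and your two flagged caveats (the $P$ versus $D$ notational slip and the implicit convexity of each $f_i$) are both accurate readings of the paper's intent.
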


Now we prove Theorem~\ref{thm:minibatch-SGD}.
Using the above lemmas for $\tilde{f}$, we have
\begin{align*}
    \E[\tilde{R}(x)] &\leq \E[\tilde{R}_S(x)] + \eta \frac{L^2T}{n} \\
    &\leq \E[\min_{x^*} \tilde{R}_S (x^*)] + \frac{\eta L^2}{2} + \frac{P^2}{\eta T} + \eta\frac{L^2T}{n}.
\end{align*}
Substituting $P = \sqrt{D^2 + 1}$ and $L = G_\alpha/\sqrt{b}$, we have
\begin{align*}
    \E[\tilde{R}(x) - \min_{x^*} \tilde{R}_S (x^*)]
    &\leq \frac{\eta G_\alpha}{2b} + \frac{D^2+1}{\eta T} + \eta\frac{G_\alpha T}{bn}
    = \eta\frac{G_\alpha}{2b}\left(1 + \frac{2}{T} \right) + \frac{D^2+1}{\eta T}.
\end{align*}
The rest is same as in Theorem~\ref{thm:convex-smooth-SGD}.

\begin{figure*}[t!]
  \centering
  \subfigure[CVaR$_{0.05}$]{\includegraphics[width=.45\hsize]{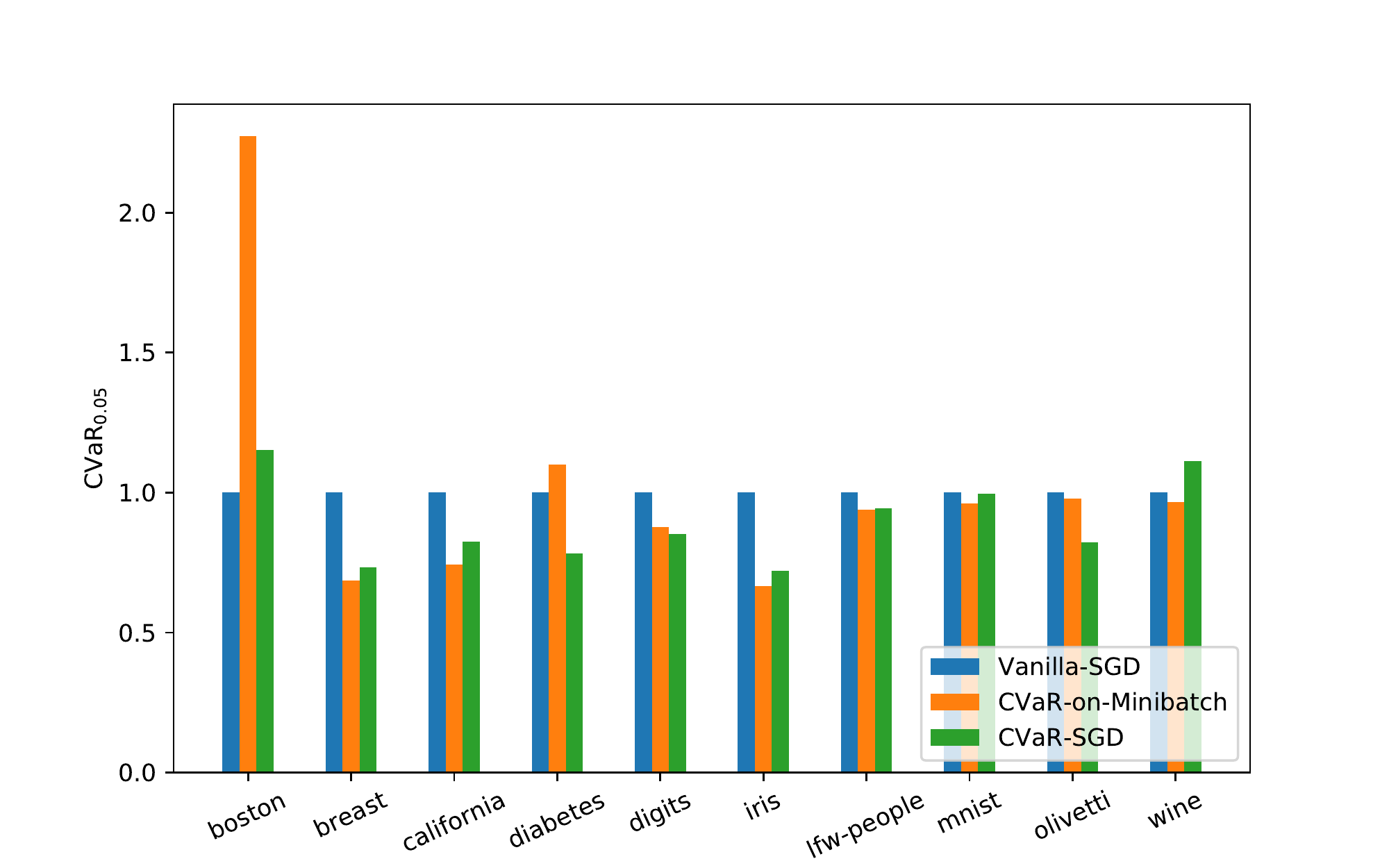}}
  \subfigure[CVaR$_{0.1}$]{\includegraphics[width=.45\hsize]{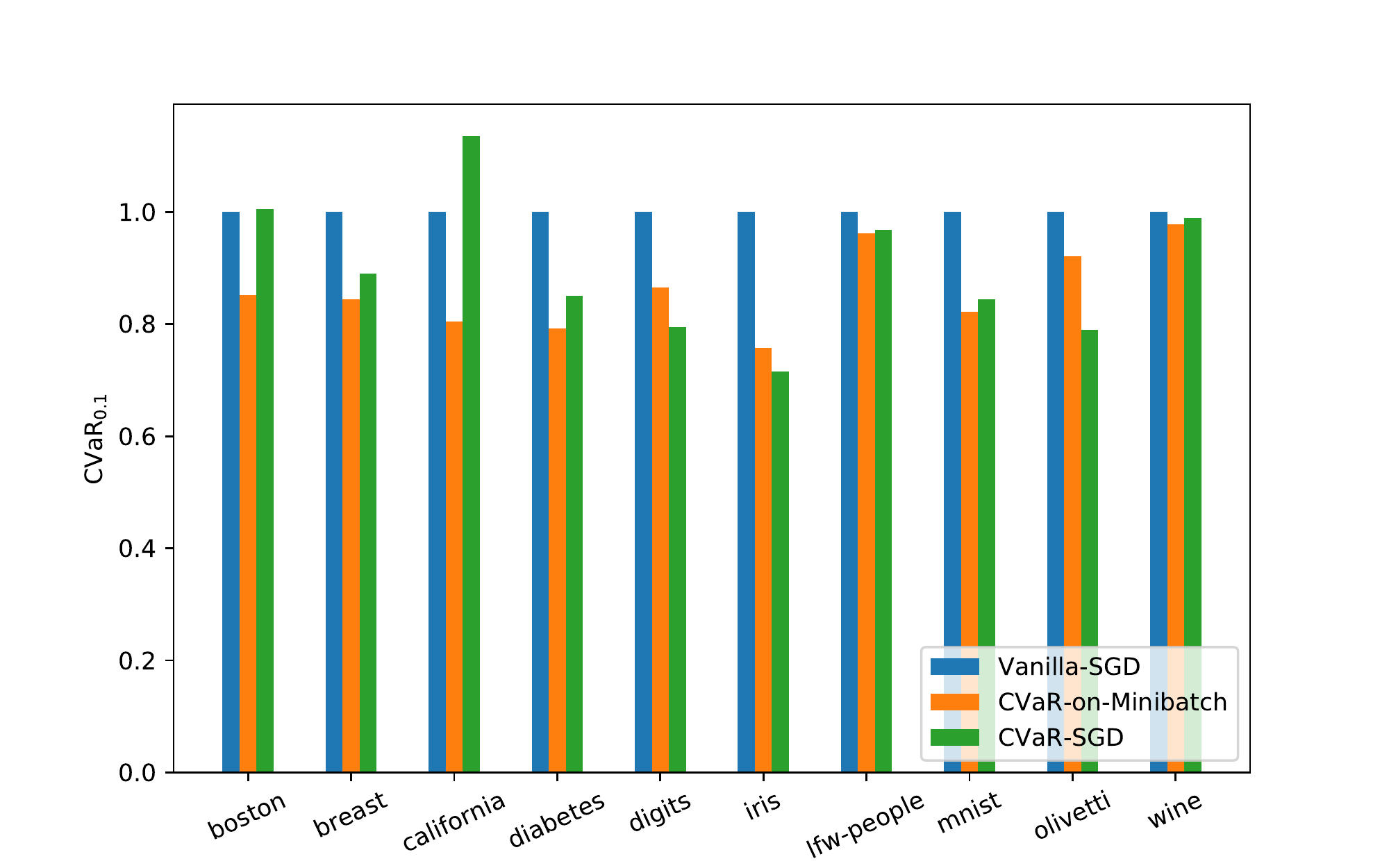}}
  \subfigure[Accuracy]{\includegraphics[width=.45\hsize]{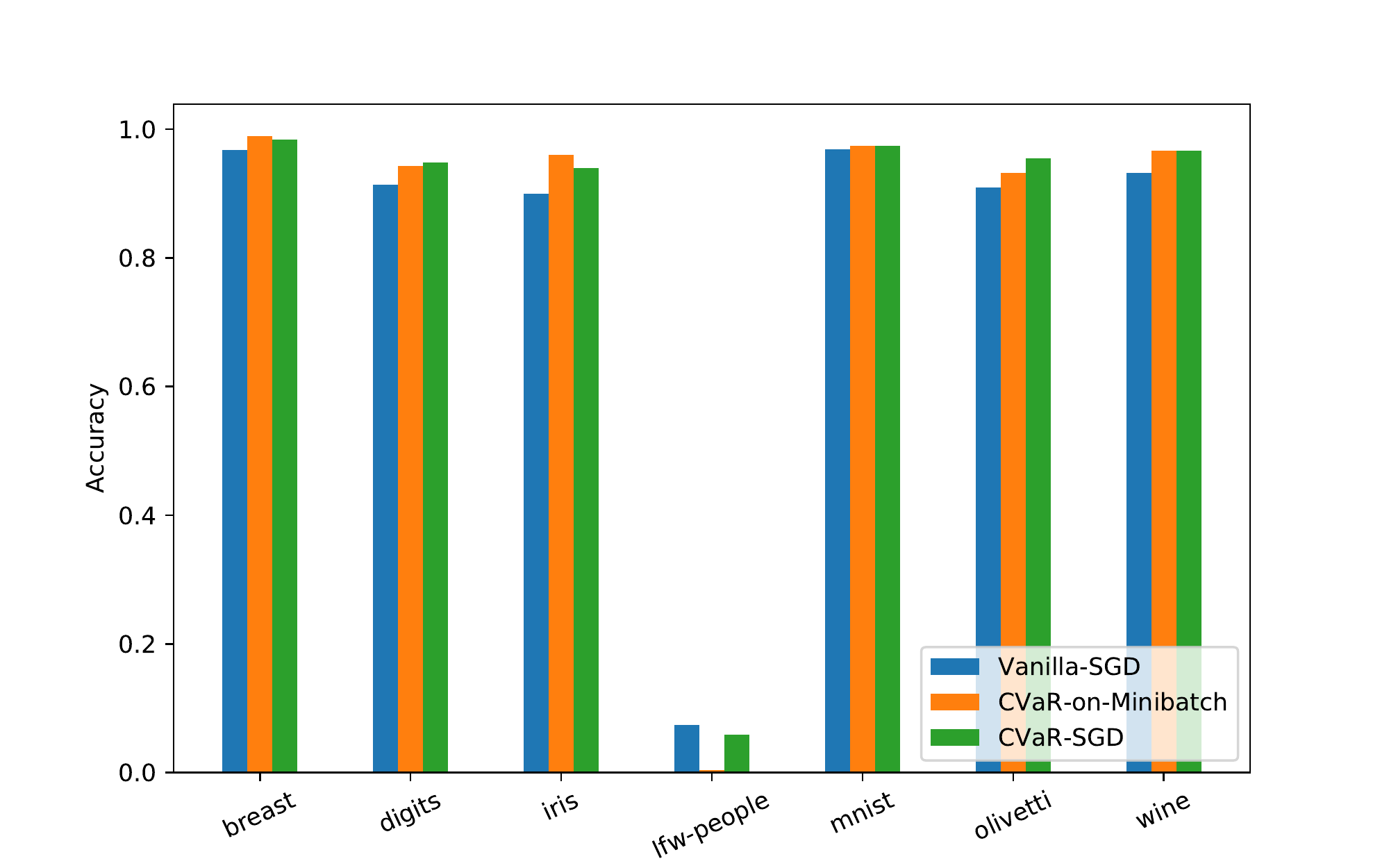}}
  \subfigure[Loss]{\includegraphics[width=.45\hsize]{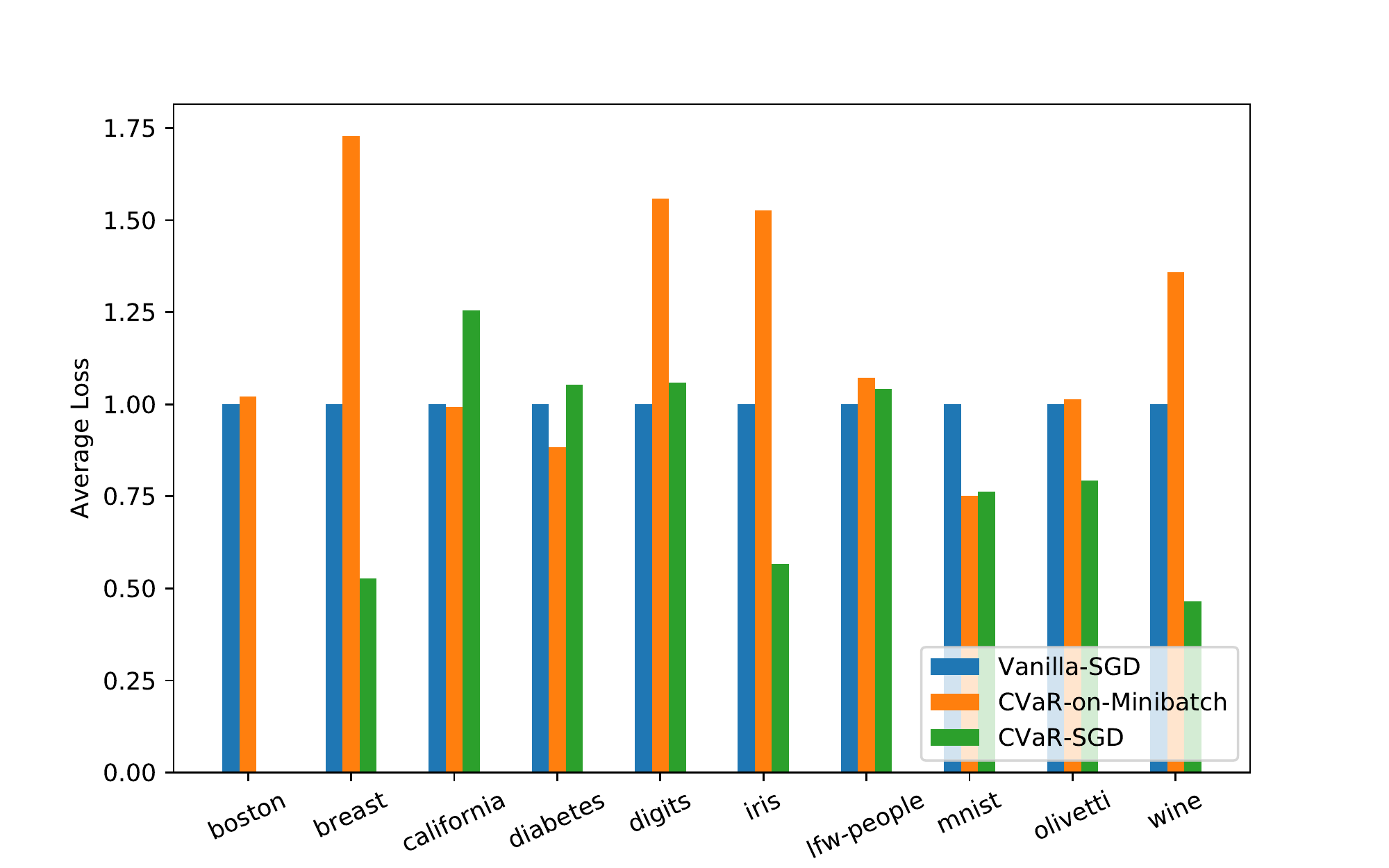}}
  \caption{Summary of the experimental results for three-layer neural networks. For comparison, the CVaR$_{0.05}$, CVaR$_{0.05}$, and average loss of \textsf{Vanilla-SGD} are normalized to one.}\label{fig:summary-3-layer}
\end{figure*}

\begin{figure*}[t!]
  \centering
  \subfigure[CVaR$_{0.1}$]{\includegraphics[width=.32\hsize]{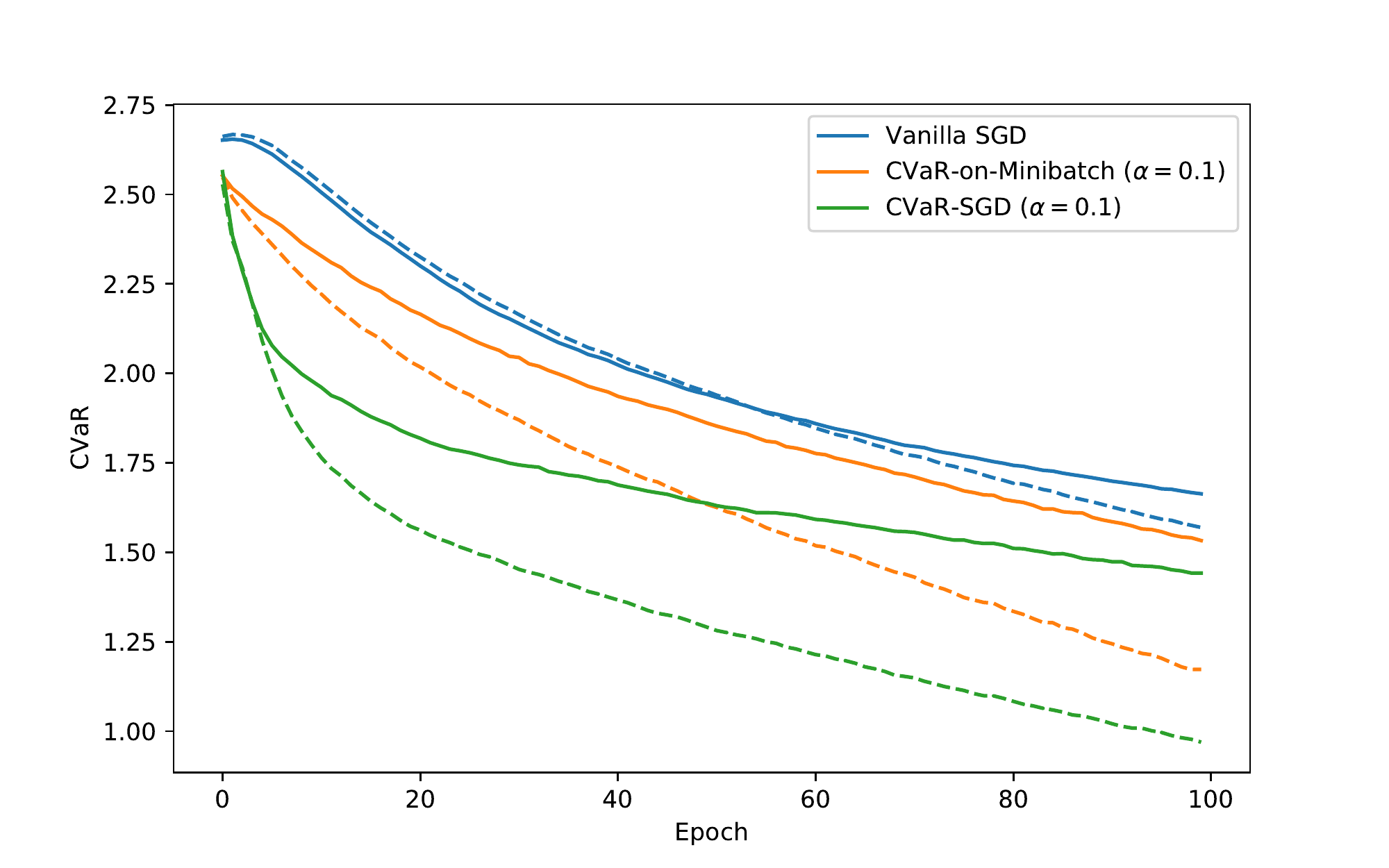}}
  \subfigure[Accuracy]{\includegraphics[width=.32\hsize]{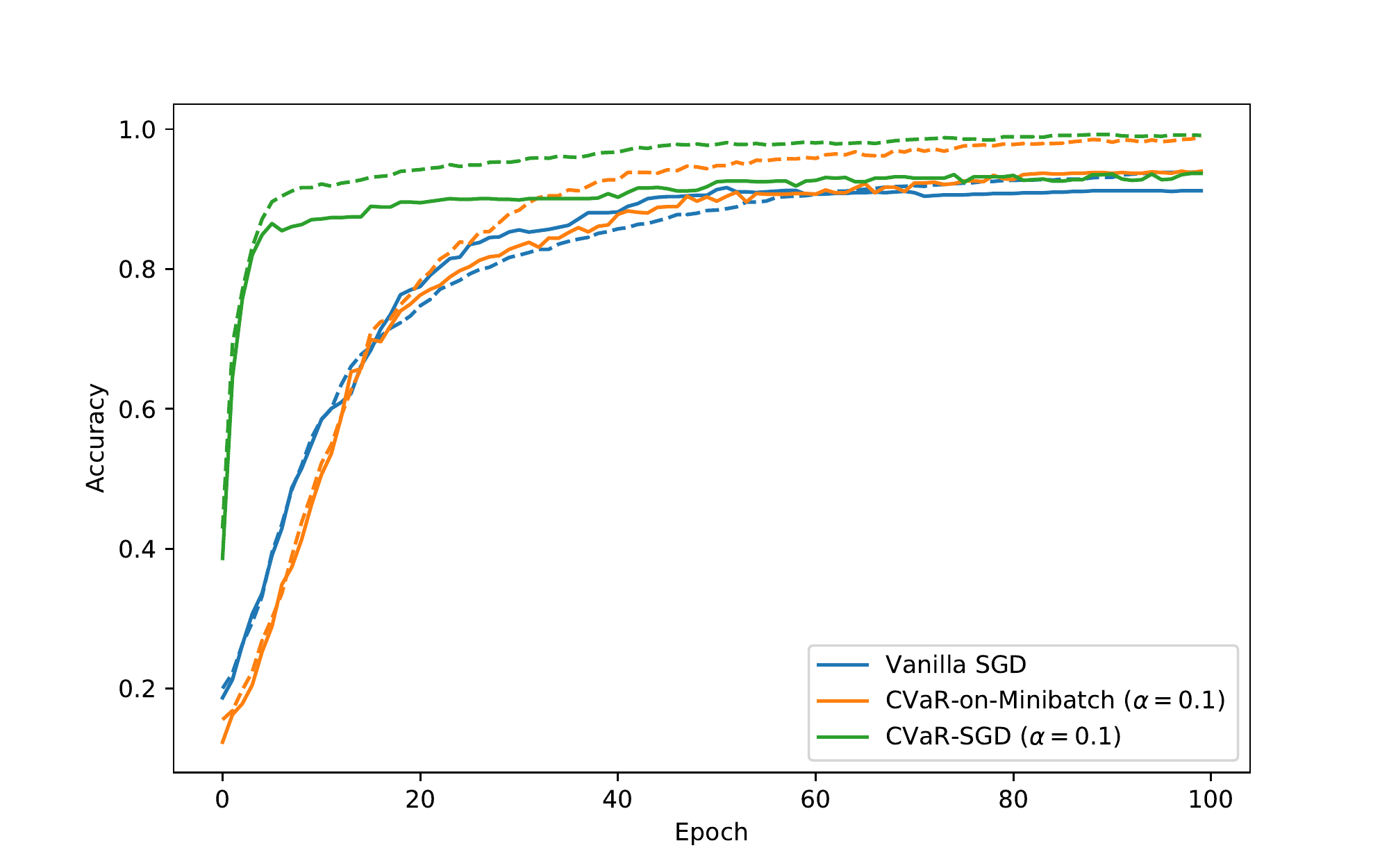}}
  \subfigure[Loss]{\includegraphics[width=.32\hsize]{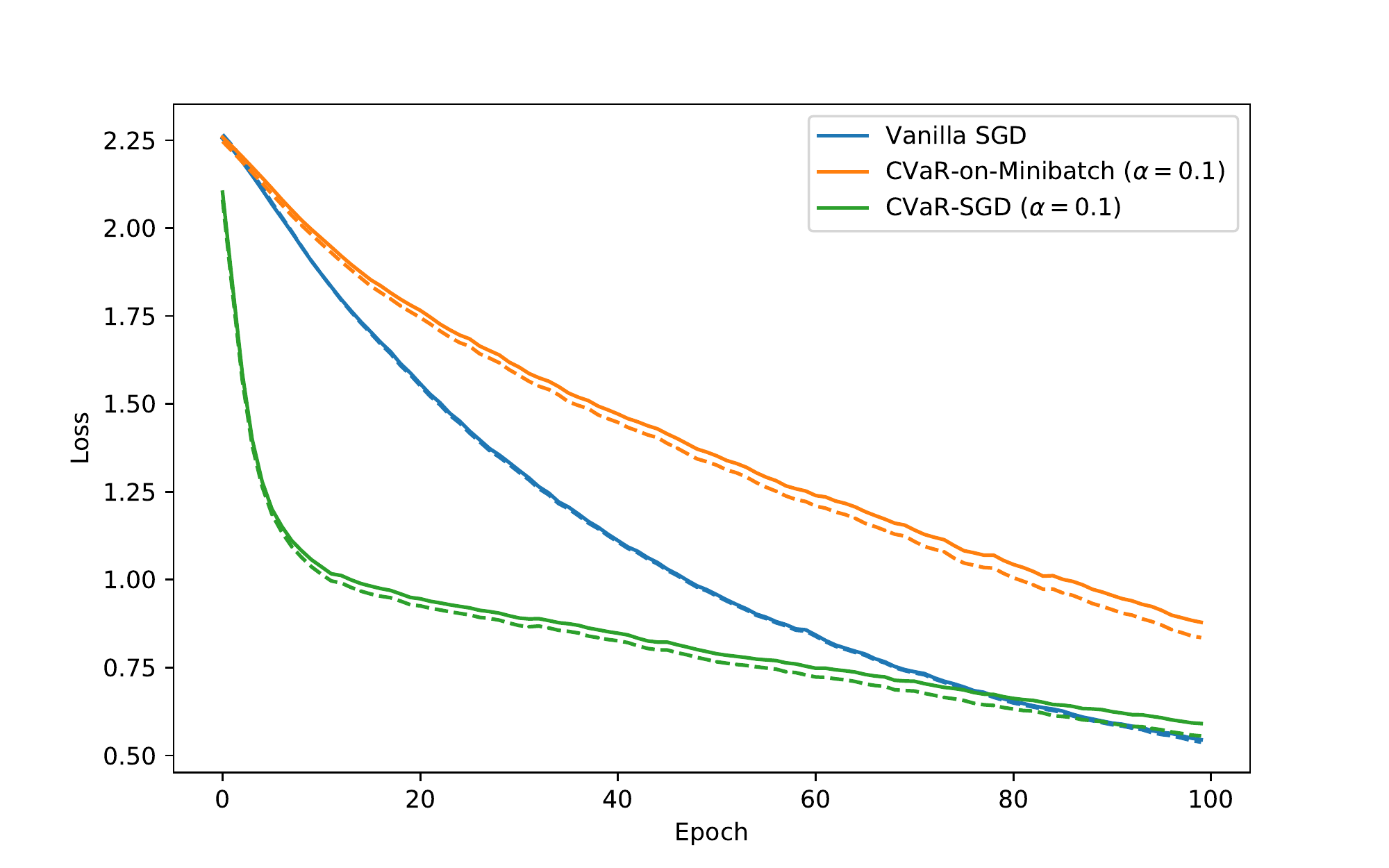}}
  \caption{Transition of the CVaR, accuracy, and average loss of the three-layer neural network on the digits dataset. Solid and dashed lines represent the results on the validation and training data, respectively.}\label{fig:digits-3-layer-curve}
\end{figure*}

\begin{figure*}[t!]
  \centering
  \subfigure[$\alpha=0.05$]{\includegraphics[width=.45\hsize]{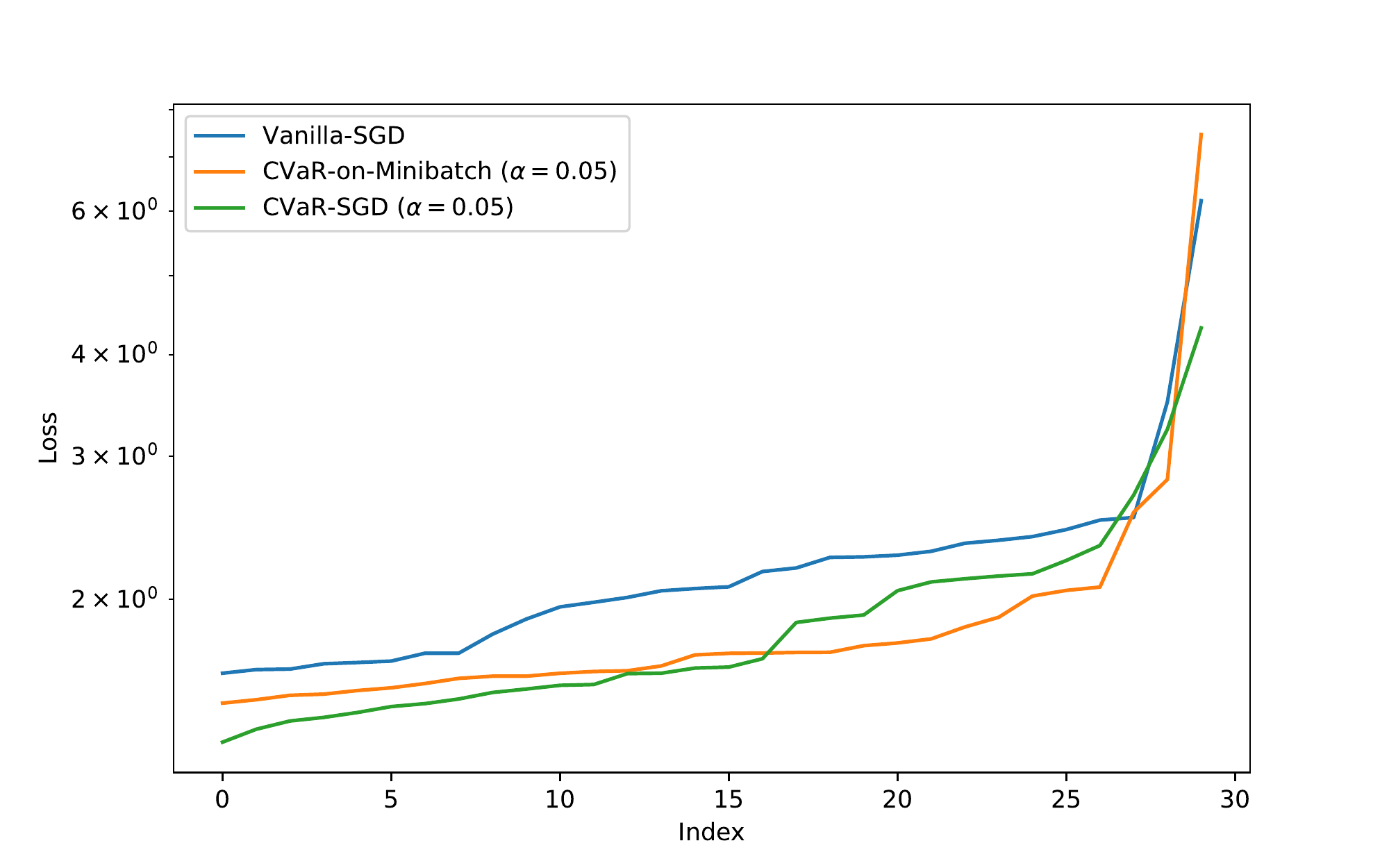}}
  \subfigure[$\alpha=0.1$]{\includegraphics[width=.45\hsize]{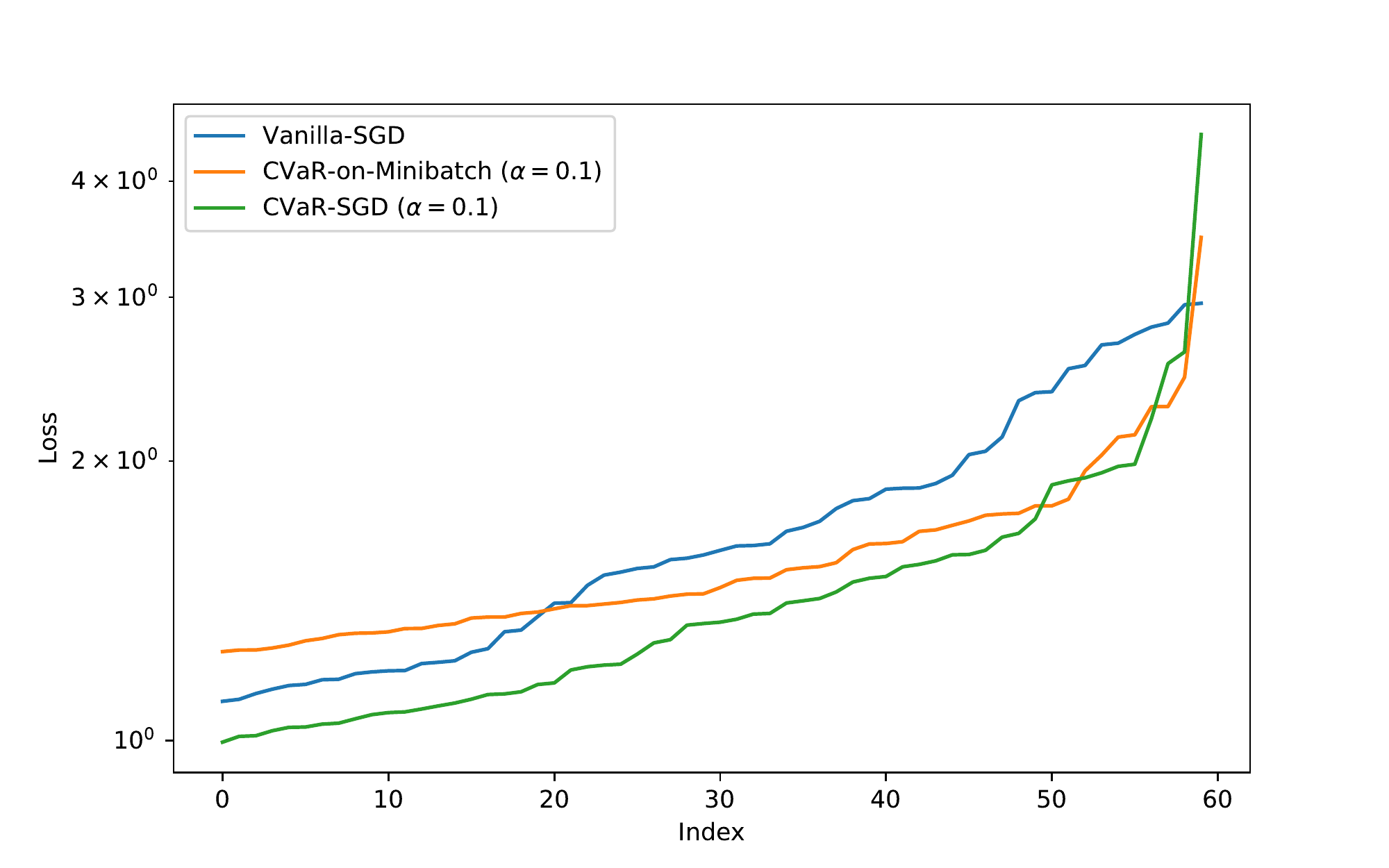}}
  \caption{Top $\alpha$-fraction of loss values of the three-layer neural network on the digits dataset sorted in increasing order}\label{fig:digits-3-layer-histogram}
\end{figure*}

\section{Further Experimental Results}\label{sec:further-experimental-results}
To demonstrate the effectiveness of our algorithms in the nonconvex setting, we conducted the same experiment as in Section~\ref{sec:experiments} using a three-layer fully connected neural network with ReLU activations having 100 hidden units in the middle layer.

See Figures~\ref{fig:summary-3-layer},~\ref{fig:digits-3-layer-curve}, and~\ref{fig:digits-3-layer-histogram} for the counterparts of Figures~\ref{fig:summary},~\ref{fig:digits-linear-curve}, and~\ref{fig:digits-linear-histogram}.

\end{document}